\documentclass{article}

\usepackage{microtype}
\usepackage{graphicx}
\usepackage{booktabs}
\usepackage{hyperref}

\usepackage[accepted]{icml2026}
\usepackage{amsmath, amssymb, amsthm}
\usepackage{enumitem}
\usepackage{tikz}
\usepackage{array}
\usetikzlibrary{trees,positioning}

\makeatletter
\providecommand{\@LN}[2]{}
\providecommand{\@LN@col}[1]{}
\makeatother

\newcommand{\E}{\mathbb{E}}

\newcommand{\cS}{\mathcal{S}}
\newcommand{\cA}{\mathcal{A}}

\newcommand{\safeincludegraphics}[2][]{%
  \IfFileExists{#2}{%
    \includegraphics[#1]{#2}%
  }{%
    \fbox{\parbox[c][0.32\linewidth][c]{\dimexpr\linewidth-2\fboxsep-2\fboxrule\relax}{%
      \centering\small Missing figure\\[0.4em]\raggedright\scriptsize\nolinkurl{#2}%
    }}%
  }%
}

\newtheorem{definition}{Definition}[section]
\newtheorem{proposition}{Proposition}[section]
\newtheorem{assumption}{Assumption}[section]
\newtheorem{lemma}{Lemma}[section]
\newtheorem{corollary}{Corollary}[section]

\title{Drowning in Routine: \\ Signal Dilution in Multi-Turn Agent Training}
\icmltitlerunning{Drowning in Routine: Signal Dilution in Multi-Turn Agent Training}

\makeatletter
\renewcommand{\ICML@appearing}{\textit{Accepted at the FAGEN Workshop at the $\mathit{43}^{rd}$ International Conference on Machine Learning}, Seoul, South Korea, 2026. Copyright 2026 by the author(s).}
\makeatother

\begin{document}
\twocolumn[
  \icmltitle{Drowning in Routine: \\ Signal Dilution in Multi-Turn Agent Training}

  \begin{icmlauthorlist}
    \icmlauthor{Yann Pernot}{mila,poly}
    \icmlauthor{Vi Retault}{poly}
  \end{icmlauthorlist}

  \icmlaffiliation{mila}{Mila - Qu\'ebec AI Institute}
  \icmlaffiliation{poly}{Polytechnique Montr\'eal}

  \icmlcorrespondingauthor{Yann Pernot}{yann.pernot@mila.quebec}

  \vskip 0.3in
]

\printAffiliationsAndNotice{}

\begin{abstract}
Multi-turn agents interleave consequential decisions with routine execution: some actions change the downstream return distribution, while others are necessary but reward-equivalent. The cost of trajectory-level credit assignment, often attributed to long horizons, is in fact governed by \emph{decision density} $\rho$: the fraction of turns whose actions affect the return. When decision density is low, routine turns create \emph{signal dilution}: they add gradient variance to trajectory-level estimators such as GRPO without adding expected signal. Under explicit assumptions, the resulting turn-level to trajectory-level signal-to-noise ratio scales as $\rho^{-1/2}$, provided critic error remains controlled. The same analysis identifies the complementary regime: at high decision density, trajectory-level methods can remain competitive while avoiding the cost of a critic. In a controlled environment where $\rho$ is exactly tunable, the predicted scaling is recovered with $R^2 = 0.999$, and the training-step gap widens significantly as $\rho \to 0$.
\end{abstract}

\section{Introduction}

Multi-turn reinforcement learning faces a credit-assignment problem: a single terminal reward must be allocated across many actions, only some of which actually determined the outcome \citep{minsky1961steps,seo2019rewards}. The textbook remedy is to learn a value function and compute per-turn advantages, as PPO \citep{schulman2017ppo} does. For LLM-scale policies, a learned critic is expensive: it doubles the model footprint and is hard to train accurately at scale \citep{shao2024deepseekmath}. Critic-free trajectory-level methods such as Group Relative Policy Optimization (GRPO) \citep{shao2024deepseekmath} are therefore attractive in single-turn LLM post-training, but the same choice can become costly in multi-turn agentic settings. Recent work reports instability of direct multi-turn GRPO \citep{li2026turnppo} and gains from finer-grained credit assignment on multi-turn agent benchmarks \citep{feng2025gigpo, wei2025reinforcing}. These results point to a recurring empirical gap without isolating its mechanism or the task properties that govern it. We address this by identifying a signal dilution mechanism behind the trajectory-level penalty and tying its strength to a structural quantity: the decision density.

Real multi-turn agents tend to spend most of their turns on actions that are necessary but not in themselves consequential \citep{zhou2024webarena}. Consider a coding agent debugging a repository \citep{yang2024sweagent}. A handful of turns correspond to task-level commitments: deciding whether the bug is in the parser or the optimizer, choosing which file to patch, or selecting one fix rather than another. These turns significantly impact the rest of the trajectory and therefore the future return distribution. We call them \emph{critical}. By contrast, some turns occur after such a subgoal has been fixed. If the agent has already decided to inspect the test failures, then running the full suite, running the failing test directly, or inspecting a cached log are different actions that serve the same purpose and tend to leave the downstream probability of solving the task nearly unchanged. We call them \emph{routine}. We argue that the fraction of trajectory turns that are critical, which we call the \emph{decision density} $\rho$, is a useful structural variable for reasoning about the cost of going critic-free. 

Looking at the problem through this lens makes the failure mode of trajectory-level training precise. We formalize idealized routine states through distributional invariance of the episode return, and show that every routine turn injects variance into a trajectory-level gradient while contributing no signal in expectation; a disadvantage the turn-level estimator avoids by zeroing routine contributions pointwise. We call this mechanism \emph{signal dilution}. We use signal-to-noise ratio (SNR) as a training-step efficiency proxy, following prior work that relates policy-gradient SNR to learning performance under a fixed evaluation budget \citep{roberts2008snr}. Under non-degeneracy assumptions, the turn-level SNR advantage scales as $\Theta(\rho^{-1/2})$ in the perfect critic regime, yielding a theoretical upper envelope on the gains a critic can provide. The analysis extends beyond the perfect critic case: turn-level gains survive only while critic-error variance stays below decision density, and saturate once it exceeds it.

We instantiate the assumptions in \emph{Diluted Doors}, a controlled MDP whose decision density is exactly tunable; Exp.~1 quantifies the predicted gap, and Exp.~2 quantifies the resulting training-step efficiency gap. The predicted $\rho^{-1/2}$ law explains $99.9\%$ of the variation in the empirical SNR ratio over a fifty-fold range of $\rho$,  and the training-step gap between the two estimators widens significantly as $\rho \to 0$.

\paragraph{Contributions.}
We (i) formalize routine and critical turns and define decision density $\rho$; (ii) identify signal dilution as the mechanism behind the trajectory-level penalty: turn-level estimators cancel routine contributions pointwise, trajectory-level estimators only in expectation, leaving excess variance; (iii) derive the SNR scaling $\Theta(\rho^{-1/2})$ and extend it to imperfect critics; (iv) reframe trajectory-level cost as governed by decision density rather than horizon alone, with $\rho^{-1/2}$ as a theoretical upper envelope; (v) quantify in a controlled MDP both the predicted SNR gap and the training-step efficiency gap, which widens as $\rho \to 0$.

\section{Preliminaries}
\label{sec:preliminaries}

\subsection{Environment model}

We model the environment the agent evolves in as a finite-horizon MDP with a terminal reward \citep{sutton2018reinforcement}, defined as a tuple $\mathcal{M} = (\cS, \cA, P, R, T)$. $\cS$ is the state space, $\cA$ is the action space, $P: \cS \times \cA \to \Delta(\cS)$ is the state transition kernel, $R: \cS_T \to [0, 1]$ is the terminal reward, defined on states reached at horizon $T$, the number of turns. In this formulation, an LLM is not itself the MDP; it is the stochastic policy embedded in an interaction process whose state evolves over turns.

For notational simplicity, we write \(s_t\) for the agent-facing decision state, including all information needed by the policy.\footnote{In an LLM agent, \(s_t\) can be taken to include the interaction history or rendered prompt, so the policy is written as \(\pi_\theta(a_t \mid s_t)\). If the rendered history is only a partial observation of hidden environment state, the original problem is a POMDP; using the full history yields the standard history-state MDP view.} The policy is assumed to be differentiable in $\theta$ for every $(s,a)$ with $\pi_\theta(a\mid s)>0$.

\subsection{Trajectory-level and turn-level estimators}
\label{sec:estimators}

We compare two families of policy-gradient estimators \citep{sutton2000policy} that differ in how they assign credit to individual turns.

Consider a trajectory $\tau = (s_0, a_0, \ldots, s_{T-1}, a_{T-1})$.
 
A \emph{turn-level} policy-gradient estimator uses a state-value function $V$ to compute a \emph{per-turn advantage} $\hat{A}_t$ at every step \citep{schulman2015gae}:
\begin{equation}
  \hat{g}_{\mathrm{turn}} = -\sum_{t=0}^{T-1} \hat{A}_t \, \nabla_\theta \log \pi_\theta(a_t \mid s_t),
  \label{eq:ppo-grad}
\end{equation}
With a perfect critic, $\hat{A}_t = A^{\pi_\theta}(s_t, a_t) = Q^{\pi_\theta}(s_t, a_t) - V^{\pi_\theta}(s_t)$, so the credit assigned to turn $t$ directly reflects how much that action changed the expected return. PPO is an instantiation of this family \citep{schulman2017ppo}.

A \emph{trajectory-level} policy-gradient estimator eliminates the value function by attaching a single scalar advantage $\hat{A}^{(i)}$ to every turn of trajectory $\tau^{(i)}$:
\begin{equation}
  \hat{g}_{\mathrm{traj}} = -\hat{A}^{(i)} \sum_{t=0}^{T-1} \nabla_\theta \log \pi_\theta(a_t^{(i)} \mid s_t^{(i)}).
  \label{eq:grpo-grad}
\end{equation}
GRPO is an instantiation \citep{shao2024deepseekmath}: it samples $G$ trajectories from the same policy and uses the group-normalized return
\begin{equation}
\hat{A}^{(i)} = \frac{R(\tau^{(i)}) - \bar{R}}{\sigma_R +\epsilon}\label{eq:grpo-advantage}
\end{equation}
where $\bar{R}$ and $\sigma_R$ are the mean and standard deviation of returns within the group. Other trajectory-level estimators (e.g., REINFORCE with a baseline \citep{greensmith2004variance,williams1992reinforce}, RLOO \citep{ahmadian2024back}) differ in how $\hat{A}^{(i)}$ is built from the group's returns but share the structure of Eq.~\eqref{eq:grpo-grad}: a single scalar advantage applied uniformly across the trajectory's scores.

The key structural contrast is that \textbf{the turn-level advantage $\hat{A}_t$ is specific to each turn}, while \textbf{the trajectory-level advantage $\hat{A}^{(i)}$ is a single scalar applied identically to every turn} of trajectory $\tau^{(i)}$.

In practice, instantiations such as PPO and GRPO optimize a clipped surrogate, and may include a KL penalty against a reference policy. We consider the simplified gradient for generality and to isolate the mechanism.

\noindent Throughout the analysis, we assume trajectory-level advantages are uniformly bounded, $|\hat A^{(i)}|\le M_A<\infty$ a.s.

\subsection{Gradient Signal-to-Noise Ratio}
\label{sec:snr}

Policy-gradient estimators are stochastic as they are computed from sampled rollouts. For a fixed parameter coordinate $k$, the parameter-wise (coordinate-wise) SNR compares the expected update $\E[\hat g_k]$ (the signal) with the standard deviation of the sampled update $\hat g_k$ (the noise).

This is the square root of the parameter-wise GSNR used by \citet{liu2020gsnr}, and a coordinate-wise analogue of the policy-gradient SNR studied by \citet{roberts2008snr}, who showed empirically that SNR tracks learning performance under a fixed evaluation budget.

\begin{definition}[Parameter-wise gradient SNR]
\label{def:snr}
For a gradient estimator $\hat g$ with $0 < \mathrm{Var}[\hat g_k] < \infty$, the signal-to-noise ratio of parameter $k$ is
\[
  \mathrm{SNR}_k(\hat g)
  \;=\;
  \frac{|\E[\hat g_k]|}{\sqrt{\mathrm{Var}[\hat g_k]}}.
\]
\end{definition}

\noindent The SNR is invariant to the sign of $\hat g$, so we omit the leading $-$ from Eqs.~\eqref{eq:ppo-grad}--\eqref{eq:grpo-grad} hereafter.

The parameter-wise SNR is the object of the theoretical analysis. To summarize full-gradient estimates in experiments, we also use a pooled SNR, defined as the mean-gradient norm divided by the aggregate coordinate standard deviation:

\begin{definition}[Pooled gradient SNR]
\label{def:snr-pooled}
For a gradient estimator $\hat g$ with $0<\sum_k\mathrm{Var}[\hat g_k]<\infty$, the pooled (full-gradient) signal-to-noise ratio is
\[
  \mathrm{SNR}(\hat g)
  \;=\;
  \frac{\|\E[\hat g]\|}{\sqrt{\textstyle\sum_k \mathrm{Var}[\hat g_k]}}.
\]
\end{definition}

\section{The Signal Dilution Problem}
\label{sec:dilution}

\subsection{Critical States, Routine States, and Decision Density}

In real trajectories, most turns offer action choices that are interchangeable with respect to the final return; routine states formalize that. \footnote{Distributional, rather than just expected, invariance is needed because trajectory-level estimators apply $\hat A^{(i)}$, a generally non-linear function of the realized return, to every score; equal expected returns across actions do not imply equal $\E[\hat A\mid s,a]$ when $\hat A$ is non-linear in $R$.}

\begin{definition}[Routine state]
\label{def:routine}
Given a policy $\pi$, let $\mathcal{U}_\pi(s) = \{a \in \cA : \pi(a \mid s) > 0\}$ denote its effective action support in state $s$. A state $s \in \cS$ is \emph{routine} (under $\pi$) if the conditional distribution\footnote{$\mathcal{L}(X \mid \cdot)$ denotes the conditional distribution (law) of $X$.} of the episode return is the same for every action in this support:
\[
  \begin{aligned}
    &\mathcal{L}\!\left(R(\tau) \;\middle|\; s_t = s,\, a_t = a,\, \pi\right)\\
    &\qquad =
    \mathcal{L}\!\left(R(\tau) \;\middle|\; s_t = s,\, a_t = a',\, \pi\right)\\
    &\qquad\qquad \forall a, a' \in \mathcal{U}_\pi(s).
  \end{aligned}
\]

A state $s \in \cS$ is \emph{critical} (under $\pi$) if it is not routine. We denote by $\cS_R \subseteq \cS$ the set of routine states under $\pi$ and by $\cS_C = \cS \setminus \cS_R$ the set of critical states under $\pi$.
\end{definition}

\noindent Definition~\ref{def:routine} describes an idealization: in real environments, exact distributional invariance is rare and routine-ness lies on a spectrum. We adopt the exact partition because it admits clean statements and enables a controlled test of the prediction; we expect the scaling to hold approximately when states are approximately routine, and return to this point in the discussion.

\noindent Decision density summarizes how concentrated the learning-relevant decisions are along the trajectory.

\begin{definition}[Decision density]
\label{def:decision-density}
Let $C^{(i)}$ denote the number of critical states visited by trajectory $i$, $\bar{C} = \E[C^{(i)}]$ the expected count. The \emph{decision density} (under $\pi$) is $\rho = \bar{C} / T$.
\end{definition}

\noindent When $\rho$ is high, most turns can affect the return; when $\rho$ is low, many turns are routine and can dilute a trajectory-level learning signal.

\subsection{Critical/Routine Gradient Decomposition}
\label{sec:decomposition}

Let $u_t^{(i)} = \nabla_\theta \log \pi_\theta(a_t^{(i)} \mid s_t^{(i)})$ denote the score function at turn $t$ of trajectory $i$. The trajectory-level gradient splits directly into critical-state and routine-state terms:
\begin{equation}
  \hat{g}_{\mathrm{traj}}
  =
  \underbrace{\hat{A}^{(i)}
  \sum_{t:\,s_t^{(i)}\in\cS_C} u_t^{(i)}}_{\hat g_C\;\text{(critical)}}
  \;+\;
  \underbrace{\hat{A}^{(i)}
  \sum_{t:\,s_t^{(i)}\in\cS_R} u_t^{(i)}}_{\hat g_R\;\text{(routine)}} .
  \label{eq:decomp}
\end{equation}

The critical term carries the learning signal. The routine term is the source of signal dilution: as we show next, it has zero expectation but nonzero variance for trajectory-level estimators.

\subsection{Routine Signal Vanishing}
\label{sec:signal-equiv}

The following proposition establishes that, for both estimators, the expected gradient receives contributions only from critical states.

\noindent We first establish a lemma that is the engine of the proof. The expectation throughout is taken over the $G$ trajectories $\tau^{(1)}, \ldots, \tau^{(G)}$ sampled independently from $\pi_\theta$.

\begin{lemma}[Vanishing score expectation at routine states]
\label{lem:vanishing-score}
Let \(F^{(i)}=\psi_i(R^{(1)},\ldots,R^{(G)})\) be a return-only trajectory scalar, with $\psi_i$ and \(F^{(i)}u_t^{(i)}\) integrable. Assume that the policy support is locally constant in $\theta$.

 Then,
\[
  \E[F^{(i)}u_t^{(i)} \mid s_t^{(i)}=s]=\mathbf{0}
  \quad \forall s\in\cS_R.
\]
\end{lemma}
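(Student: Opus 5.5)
The plan is to condition further on the action $a_t^{(i)}$ and apply the tower property. This separates the score, which depends only on $(s,a)$, from the return-only scalar $F^{(i)}$. Distributional invariance at routine states then makes the conditional expectation of $F^{(i)}$ independent of $a$, and the vanishing of the expected score (the standard score-function identity) finishes the argument.

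\textbf{Step 1 (reduction to a conditional mean of $F^{(i)}$).} Fix $s\in\cS_R$. Given $s_t^{(i)}=s$ and $a_t^{(i)}=a$, the score $u_t^{(i)}=\nabla_\theta\log\pi_\theta(a\mid s)$ is deterministic. Hence
\[
\E[F^{(i)}u_t^{(i)}\mid s_t^{(i)}=s]=\sum_{a\in\mathcal U_\pi(s)}\pi_\theta(a\mid s)\,\nabla_\theta\log\pi_\theta(a\mid s)\,h(a),
\]
where $h(a)=\E[F^{(i)}\mid s_t^{(i)}=s,a_t^{(i)}=a]$. For continuous action spaces the sum is replaced by an integral. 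Integrability of $F^{(i)}u_t^{(i)}$ justifies the tower property, and only actions in the support contribute.

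\textbf{Step 2 ($h$ does not depend on $a$).} The key observation is that $F^{(i)}=\psi_i(R^{(1)},\dots,R^{(G)})$ depends on $\tau^{(i)}$ only through $R^{(i)}$, while the other $G-1$ trajectories are sampled independently of $\tau^{(i)}$. Conditional on $(s_t^{(i)},a_t^{(i)})=(s,a)$, the joint law of $(R^{(1)},\dots,R^{(G)})$ is therefore the product of $\mathcal L(R^{(i)}\mid s_t=s,a_t=a,\pi)$ with the unconditional laws of the other returns. By Definition~\ref{def:routine}, the first factor is the same for every $a\in\mathcal U_\pi(s)$. The joint law, and hence $h(a)$, is thus constant on the support; call it $h(s)$.

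\textbf{Step 3 (vanishing score).} Pulling the constant out of the sum gives
\[
h(s)\sum_{a\in\mathcal U_\pi(s)}\nabla_\theta\pi_\theta(a\mid s)=h(s)\,\nabla_\theta\sum_{a\in\mathcal U_\pi(s)}\pi_\theta(a\mid s)=h(s)\,\nabla_\theta 1=\mathbf 0 .
\]
Interchanging the gradient and the sum is legitimate because the support $\mathcal U_\pi(s)$ is locally constant in $\theta$, which is the stated assumption. The probabilities therefore sum to one over a fixed set in a neighbourhood of $\theta$. In the continuous case, the interchange would need a dominated-convergence condition that I would add or absorb into the integrability assumption.

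\textbf{Main obstacle.} The delicate step is Step 2: making the conditioning rigorous when $F^{(i)}$ couples all $G$ trajectories. The conditioning event concerns trajectory $i$ only, so independence across the group must be used explicitly. One must also argue that conditioning on $(s_t,a_t)$ is consistent with the definition's conditioning at a given time $t$. I would handle this by writing $h(a)=\int \E[\psi_i(r,R^{(-i)})]\,d\mathcal L(R^{(i)}\mid s,a)(r)$ via Fubini, which makes the dependence on $a$ run solely through the invariant law. A minor point is that if $s$ is visited with probability zero at time $t$, the statement holds vacuously (any version of the conditional expectation can be taken to be $\mathbf 0$).
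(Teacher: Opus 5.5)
Your proof is correct and follows essentially the same route as the paper. You condition on the action so the score is fixed, use independence across the group together with distributional invariance at routine states to make the conditional mean of $F^{(i)}$ constant over $\mathcal{U}_\pi(s)$, and conclude from $\sum_{a\in\mathcal{U}_\pi(s)}\nabla_\theta\pi_\theta(a\mid s)=\mathbf{0}$ via the locally constant support. The only difference is cosmetic: the paper first conditions on the other rollouts $\tau^{(-i)}$ and applies invariance to $\mathcal{L}(R^{(i)}\mid s,a,\tau^{(-i)})$, whereas you integrate $R^{(-i)}$ out through the product law and Fubini, which is the same computation in the other order.
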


\begin{proof}
We prove the Lemma in the discrete action space setting.

\noindent Fix a rollout $i$, a turn $t$, and a state $s\in\cS_R$. Write $\tau^{(-i)}=\{\tau^{(j)}\}_{j\neq i}$.

\noindent By the law of iterated expectations, $F^{(i)} u_t^{(i)}$ being integrable,
\[
  \begin{aligned}
    &\E\!\left[F^{(i)} u_t^{(i)} \mid s_t^{(i)}=s\right]\\
    &\qquad =
    \E_{\tau^{(-i)} \mid s_t^{(i)}=s}\!\left[
      \E\!\left[F^{(i)} u_t^{(i)}
      \;\middle|\; s_t^{(i)}=s, \tau^{(-i)}\right]
    \right].
  \end{aligned}
\]
For fixed $\tau^{(-i)}$, the returns $R^{(j)}$ for $j\neq i$ are fixed. Define
\[
  \psi_{i,\tau^{(-i)}}(r)
  =
  \psi_i\!\left(
    R^{(1)},\ldots,R^{(i-1)},r,R^{(i+1)},\ldots,R^{(G)}
  \right).
\]
For any supported action $a$, set
\[
  m_a
  :=
  \E\!\left[
    \psi_{i,\tau^{(-i)}}(R^{(i)})
    \;\middle|\; s_t^{(i)}=s, a_t^{(i)}=a, \tau^{(-i)}
  \right].
\]
Since $s$ is routine, and $\tau^{(-i)}$ is independent of rollout $i$, Definition~\ref{def:routine} gives
\[
  \begin{aligned}
    &\mathcal{L}\!\left(R^{(i)} \mid s_t^{(i)}=s, a_t^{(i)}=a, \tau^{(-i)}\right)\\
    &\qquad =
    \mathcal{L}\!\left(R^{(i)} \mid s_t^{(i)}=s, a_t^{(i)}=a', \tau^{(-i)}\right)\\
    &\qquad\qquad \forall a, a' \in \mathcal{U}_\pi(s)
  \end{aligned}
\]

\noindent Therefore, by integrability of $\psi_i$,
\[
m_a=m_{a'}=c(s,\tau^{(-i)}) \quad \forall a, a' \in \mathcal{U}_\pi(s)
\]

\noindent Then
\[
\begin{aligned}
  &\E\!\left[F^{(i)} u_t^{(i)}
    \;\middle|\; s_t^{(i)}=s, \tau^{(-i)}\right]\\
  &=
  \sum_{a\in\mathcal{U}_\pi(s)}
  \Pr(a_t^{(i)}=a \mid s_t^{(i)}=s, \tau^{(-i)})\\
  &\quad{}\times
  \E\!\left[
    F^{(i)}u_t^{(i)}
    \;\middle|\; s_t^{(i)}=s, a_t^{(i)}=a, \tau^{(-i)}
  \right] \\
  &=
  \sum_{a\in\mathcal{U}_\pi(s)}
  \pi_\theta(a\mid s)
  \nabla_\theta\log\pi_\theta(a\mid s)\\
  &\quad{}\times
  \E\!\left[
    F^{(i)}
    \;\middle|\; s_t^{(i)}=s, a_t^{(i)}=a, \tau^{(-i)}
  \right] \\
  &\hspace{1em}\text{($u_t^{(i)}$ fixed given $s,a$)}\\
  &=
  \sum_{a\in\mathcal{U}_\pi(s)}
  \pi_\theta(a\mid s)
  \nabla_\theta\log\pi_\theta(a\mid s)\,m_a \\
  &=
  c(s,\tau^{(-i)})
  \sum_{a\in\mathcal{U}_\pi(s)}
  \nabla_\theta\pi_\theta(a\mid s) \\
  &=
  c(s,\tau^{(-i)})
  \nabla_\theta
  \sum_{a\in\mathcal{U}_\pi(s)}\pi_\theta(a\mid s) \\
  &= \mathbf{0}.\\
  &\hspace{1em}\text{(locally constant support)}
\end{aligned}
\]
Taking the outer expectation gives the claim.
\end{proof}

\noindent Lemma~\ref{lem:vanishing-score} can be directly used for the following proposition.

\begin{proposition}[Routine Signal Vanishing]
\label{prop:signal-equiv}
Under the assumptions of Lemma~\ref{lem:vanishing-score} (trajectory-level only), the expected gradient of both estimators receives contributions only from critical states:
\begin{enumerate}[nosep, label=(\roman*)]
  \item \textbf{Trajectory-level estimator.} The gradient contribution at routine states vanishes in expectation:
  \[
    \begin{aligned}
      &\E\!\left[
        \hat{A}^{(i)} \cdot u_t^{(i)}
        \;\middle|\; s_t^{(i)}=s
      \right]\\
      &\qquad = \mathbf{0}
      \quad \forall s\in\cS_R,\\
      \text{so}\qquad
      \E[\hat{g}_{\mathrm{traj}}]
      &=
      -\sum_{t=0}^{T-1}
      \E\!\left[\hat{A}\cdot u_t\,\mathbf 1\{s_t\in\cS_C\}\right].
    \end{aligned}
    \label{eq:grpo-signal}
  \]
  \item \textbf{Turn-level estimator (perfect critic, $\hat{A}_t = A^{\pi_\theta}(s_t, a_t)$).} The gradient contribution at routine states vanishes deterministically:
  \[
    \begin{aligned}
      \hat{A}_t \cdot u_t
      &= \mathbf{0}
      \quad \text{a.s.\ for all } t \text{ with } s_t \in \cS_R,\\
      \text{so}\qquad
      \E[\hat{g}_{\mathrm{turn}}]
      &=
      -\sum_{t=0}^{T-1}
      \E\!\left[\hat{A}_t\cdot u_t\,\mathbf 1\{s_t\in\cS_C\}\right].
    \end{aligned}
    \label{eq:ppo-signal}
  \]
\end{enumerate}
\end{proposition}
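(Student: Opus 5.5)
Both parts reduce to a single step each, and I would handle them separately.

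\textbf{Part (i).} I would apply Lemma~\ref{lem:vanishing-score} with $F^{(i)}=\hat A^{(i)}$. The GRPO advantage in Eq.~\eqref{eq:grpo-advantage} is a deterministic function $\psi_i(R^{(1)},\ldots,R^{(G)})$ of the group's returns alone, because $\bar R$ and $\sigma_R$ are computed from the returns. The same holds for RLOO and for REINFORCE with a group baseline. Integrability follows from the standing bound $|\hat A^{(i)}|\le M_A$. We also need $u_t^{(i)}$ to be integrable; in the discrete-action setting $u_t$ takes finitely many values on the support at each state, so I would either assume $\E\|u_t\|<\infty$ or note that it is already implied by the lemma's hypothesis. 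The lemma then gives $\E[\hat A^{(i)}u_t^{(i)}\mid s_t^{(i)}=s]=\mathbf 0$ for every $s\in\cS_R$.

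To get the formula for $\E[\hat g_{\mathrm{traj}}]$, I would use linearity over $t$. For each $t$ I split $\hat A u_t = \hat A u_t\mathbf 1\{s_t\in\cS_C\}+\hat A u_t\mathbf 1\{s_t\in\cS_R\}$. By the tower property, the routine term has expectation $\E\big[\mathbf 1\{s_t\in\cS_R\}\,\E[\hat A u_t\mid s_t]\big]=\mathbf 0$. With the paper's sign convention this leaves exactly the critical sum.

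\textbf{Part (ii).} Here no expectation argument is needed. Fix $s\in\cS_R$ and a supported action $a\in\mathcal U_\pi(s)$; since $a_t$ is sampled from $\pi_\theta$, it lies in the support almost surely. The return is bounded in $[0,1]$, so $Q^{\pi}(s,a)=\E[R\mid s_t=s,a_t=a]$ is well defined. By Definition~\ref{def:routine}, the conditional laws of $R$ coincide across supported actions, so in particular their means coincide: $Q^\pi(s,a)=Q^\pi(s,a')=:q(s)$ for all $a,a'\in\mathcal U_\pi(s)$. Then $V^\pi(s)=\sum_a\pi(a\mid s)Q^\pi(s,a)=q(s)$. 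Hence $A^\pi(s,a)=0$ for every supported $a$, and therefore $\hat A_t u_t=\mathbf 0$ almost surely whenever $s_t\in\cS_R$. Summing over $t$ and taking expectations gives the stated formula for $\E[\hat g_{\mathrm{turn}}]$.

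\textbf{Main obstacle.} The only delicate point is a measure-theoretic subtlety in applying the lemma. Part (ii) uses only expected-return invariance. Part (i), by contrast, needs the full distributional invariance of Definition~\ref{def:routine} together with the independence of rollouts, because $\hat A^{(i)}$ is a nonlinear function of the returns and couples rollout $i$ to the others. This is precisely what Lemma~\ref{lem:vanishing-score} absorbs, so I would make sure the hypotheses match exactly: $\psi_i$ must depend only on returns, with no dependence on states or actions, and the locally constant support assumption must be carried over.

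A second, more technical point is conditioning on $\{s_t=s\}$ when this event has probability zero. In the discrete-state setting I would simply restrict to states reached with positive probability, since the others do not contribute to the expectation.
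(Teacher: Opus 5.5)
Your proposal is correct and follows essentially the same route as the paper: part (i) applies Lemma~\ref{lem:vanishing-score} with $F^{(i)}=\hat A^{(i)}$ and then uses linearity, and part (ii) uses that equal conditional return laws give equal $Q$-values across supported actions, so $A^{\pi_\theta}=0$ at routine states. Your extra bookkeeping (the tower-property split on $\mathbf 1\{s_t\in\cS_R\}$, integrability via $|\hat A^{(i)}|\le M_A$, and restricting to reachable states) only makes explicit steps the paper leaves implicit.
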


\begin{proof}
\emph{Part (i).} Lemma~\ref{lem:vanishing-score} gives $\E[\hat{A}^{(i)} u_t^{(i)}] = \mathbf{0}$ at every routine step. The claim on $\E[\hat{g}_{\mathrm{traj}}]$ follows by linearity of expectation.

\emph{Part (ii).} By Definition~\ref{def:routine}, for $s_t\in\cS_R$, the expected return is the same for every
$a\in\mathcal{U}_\pi(s_t)$. Therefore
$Q^{\pi_\theta}(s_t,a)=V^{\pi_\theta}(s_t)$ for all supported actions,
so $A^{\pi_\theta}(s_t,a)=0$.  The claim on $\E[\hat{g}_{\mathrm{turn}}]$ again follows by linearity of expectation.
\end{proof}

\paragraph{Interpretation.} Proposition~\ref{prop:signal-equiv} pins down the \textbf{source of signal dilution}: for both estimators the expected gradient comes entirely from critical states, but the two families cancel routine contributions differently. The turn-level estimator zeros them \emph{pointwise} ($\hat A_t u_t = \mathbf 0$ at every routine step), while the trajectory-level estimator zeros them only \emph{in expectation} ($\E[\hat A\, u_t]=\mathbf 0$, but $\hat A\, u_t\neq \mathbf 0$ on individual trajectories). The residual per-trajectory routine noise is what produces the trajectory-level estimator's excess variance, and it leads to a structural gap (Section~\ref{sec:structural}).

\subsection{Structural SNR Scaling}
\label{sec:structural}
We now quantify the cost of the variance gap identified in Proposition~\ref{prop:signal-equiv}. Throughout the section, we consider the fixed-$\bar C$ sweep $T = \bar C / \rho \to \infty$ (the number of critical turns is held fixed; routine turns are added). Under Assumptions~\ref{as:a1}--\ref{as:a2} and a perfect critic, the turn-level to trajectory-level SNR ratio admits a closed-form scaling in the decision density $\rho$, which diverges as $\rho \to 0$.

The first proposition is coordinate-wise. Fix a parameter coordinate $k$ and write
\[
  \begin{aligned}
    \Phi_{C,k}
    &= \sum_{t:\,s_t\in\cS_C} u_{t,k},
    \quad
    \Phi_{R,k}
    = \sum_{t:\,s_t\in\cS_R} u_{t,k}.
  \end{aligned}
\]
\[
  \Phi_{k} = \Phi_{C,k} + \Phi_{R,k}.
\]

The result rests on two assumptions: a regularity condition keeping the critical sub-problem well-posed as $T$ grows (A1), and a substantive non-degeneracy condition on routine exploration (A2).

\begin{assumption}[A1 -- Stable critical sub-problem]
\label{as:a1}
The critical-state contributions to the expected gradient are bounded: there exist constants $0<m_\mu\le M_\mu<\infty$ such that
\[
  m_\mu \;\le\; |\E[\hat g_{\mathrm{turn},C,k}]|,\;|\E[\hat g_{\mathrm{traj},C,k}]| \;\le\; M_\mu,
\]
and the corresponding critical variances are bounded: there exists $0<m_C\leq M_C<\infty$ with
\[
  m_C\leq\mathrm{Var}[\hat g_{\mathrm{turn},C,k}]\le M_C,
  \qquad
  \mathrm{Var}[\hat A\,\Phi_{C,k}]\le M_C .
\]
\end{assumption}

A1 fixes the scale of the critical problem as routine turns are added: the reward-relevant signal neither vanishes nor blows up, and critical variance stays bounded for both estimators. The underlying critical task does not change as we scale $T$.

\begin{assumption}[A2 -- Non-degenerate routine exploration and score accumulation]
\label{as:a2}
Let $N_R$ be the routine-state count of a trajectory. Let $\mathcal{H}_C$ be the sigma-field generated by $N_R$, the critical trajectory information, and the group's terminal returns, and assume $\hat A$ is $\mathcal{H}_C$-measurable. There exist constants $c_N, c_A, c_R > 0$ and $C_R < \infty$ such that
\begin{enumerate}[label=(\alph*)]
  \item $\E[N_R] \ge c_N T$;
  \item $\E[\hat A^2\, N_R] \ge c_A\,\E[N_R]$;
  \item $c_R\, N_R \;\le\; \E[\Phi_{R,k}^2 \mid \mathcal{H}_C] \;\le\; C_R\, N_R$ a.s.
\end{enumerate}
\end{assumption}

A2 imposes three conditions: (a) the routine-state count grows linearly in $T$; (b) the trajectory advantage doesn't collapse on routine-heavy trajectories; (c) conditional routine score energy grows proportionally to the routine-state count. (a) is bookkeeping. The upper half of (c) rules out coherent long-range alignment of routine scores and rarely fails outside adversarial constructions. The load-bearing conditions are (b) and the lower half of (c). (b) is specific to group-relative advantages: it fails when all rollouts in a group share the same return, a known GRPO failure mode tied to task difficulty and group size \citep{le2026noprompt}. The lower half of (c) is the broader concern: it fails when the policy becomes nearly deterministic on routine turns, which entropy collapse pushes toward in late training \citep{cui2025entropy}.

\begin{lemma}[Critical/routine bounds at coordinate $k$]
\label{lem:critical-routine-bounds}
Assume a perfect critic and the standing bounded-advantage condition, and fix a coordinate $k$.
\begin{enumerate}[label=(\roman*)]
  \item Under the upper-bound parts of A1 and the upper half of A2(c),
  \[
    |\E[\hat g_{\mathrm{turn},k}]|,\; |\E[\hat g_{\mathrm{traj},k}]| = O(1),
  \]
  \vspace{-15pt}
  \[
    \mathrm{Var}[\hat g_{\mathrm{turn},k}] = O(1),
    \quad
    \mathrm{Var}[\hat g_{\mathrm{traj},k}] = O(T).
  \]
  \item If, in addition, the lower-bound parts of A1, A2(a), A2(b), and the lower half of A2(c) hold at $k$, the matching $\Omega(\cdot)$ bounds hold.
\end{enumerate}
Proof in Appendix~\ref{app:structural-proof}.
\end{lemma}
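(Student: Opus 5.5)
The plan is to work with the decomposition \eqref{eq:decomp} at coordinate $k$, writing $\hat g_{\mathrm{traj},k}=\hat A\,\Phi_{C,k}+\hat A\,\Phi_{R,k}$ and, for the turn-level estimator under a perfect critic, $\hat g_{\mathrm{turn},k}=\hat g_{\mathrm{turn},C,k}$ almost surely by Proposition~\ref{prop:signal-equiv}(ii). Every bound then reduces to controlling the critical piece via A1 and the routine piece via A2.

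\emph{Means.} By Proposition~\ref{prop:signal-equiv}, $\E[\hat g_{\mathrm{turn},k}]=\E[\hat g_{\mathrm{turn},C,k}]$ and $\E[\hat g_{\mathrm{traj},k}]=\E[\hat A\,\Phi_{C,k}]$, since $\E[\hat A\,\Phi_{R,k}]=\sum_t\E[\hat A u_{t,k}\mathbf 1\{s_t\in\cS_R\}]=0$ by Lemma~\ref{lem:vanishing-score} with $\psi_i$ the GRPO map. A1 then gives both the $O(1)$ bound (via $M_\mu$) and the $\Omega(1)$ bound (via $m_\mu$).

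\emph{Turn-level variance.} Because the routine terms vanish pointwise, $\mathrm{Var}[\hat g_{\mathrm{turn},k}]=\mathrm{Var}[\hat g_{\mathrm{turn},C,k}]\in[m_C,M_C]$, so it is $\Theta(1)$.

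\emph{Trajectory-level upper bound.} I expand $\mathrm{Var}[X+Y]\le 2\mathrm{Var}[X]+2\mathrm{Var}[Y]$ with $X=\hat A\Phi_{C,k}$ and $Y=\hat A\Phi_{R,k}$. A1 gives $\mathrm{Var}[X]\le M_C$. For $Y$, I use $\mathrm{Var}[Y]\le\E[\hat A^2\Phi_{R,k}^2]=\E[\hat A^2\,\E[\Phi_{R,k}^2\mid\mathcal H_C]]\le M_A^2C_R\,\E[N_R]\le M_A^2C_RT$. This uses the $\mathcal H_C$-measurability of $\hat A$ and the upper half of A2(c).

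\emph{Trajectory-level lower bound (the main obstacle).} Here cross terms and the mean could in principle cancel the routine energy. First I lower-bound the second moment: $\E[Y^2]=\E[\hat A^2\E[\Phi_{R,k}^2\mid\mathcal H_C]]\ge c_R\E[\hat A^2N_R]\ge c_Rc_A\E[N_R]\ge c_Rc_Ac_NT$, using A2(b), A2(a) and the lower half of A2(c). Since $\E[Y]=0$, this gives $\mathrm{Var}[Y]=\Omega(T)$. To pass to $X+Y$, I plan to use the reverse triangle inequality in $L^2$ on the centered variables, $\sqrt{\mathrm{Var}[X+Y]}\ge\sqrt{\mathrm{Var}[Y]}-\sqrt{\mathrm{Var}[X]}\ge\sqrt{c_Rc_Ac_NT}-\sqrt{M_C}$. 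This is $\Omega(\sqrt T)$ for large $T$, which sidesteps the covariance term entirely. Squaring yields $\mathrm{Var}[\hat g_{\mathrm{traj},k}]=\Omega(T)$ in the fixed-$\bar C$ sweep.

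The delicate points to check are that conditioning on $\mathcal H_C$ is legitimate, which requires integrability. This follows from the upper half of A2(c) together with bounded $\hat A$.
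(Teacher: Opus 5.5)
Your proposal is correct and follows essentially the same route as the paper: the same critical/routine split, means via Proposition~\ref{prop:signal-equiv} and A1, the tower property on $\mathcal H_C$ with A2 to get $\mathrm{Var}[\hat A\,\Phi_{R,k}]=\Theta(T)$, and the reverse triangle inequality in $L^2$ for the lower bound. The only difference is cosmetic. For the upper bound you use $\mathrm{Var}[X+Y]\le 2\mathrm{Var}[X]+2\mathrm{Var}[Y]$ where the paper uses the forward triangle inequality, and the two are equivalent up to constants.
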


\begin{proposition}[Structural SNR scaling]
\label{prop:structural-scaling}
Under the assumptions of Lemma~\ref{lem:critical-routine-bounds}(ii),
\[
  \frac{\mathrm{SNR}_k(\hat g_{\mathrm{turn}})}{\mathrm{SNR}_k(\hat g_{\mathrm{traj}})} \;=\; \Theta(\rho^{-1/2}).
\]
Proof in Appendix~\ref{app:structural-proof}.
\end{proposition}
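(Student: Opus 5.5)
The ratio follows almost directly from Lemma~\ref{lem:critical-routine-bounds}(ii), which gives two-sided $\Theta$ bounds on each of the four quantities entering the two SNRs. I would take that lemma as given and derive the ratio algebraically. Then I would translate the resulting dependence on $T$ into a dependence on $\rho$, using the fixed-$\bar C$ sweep $T=\bar C/\rho$.

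\textbf{Step 1: the two SNRs.} Lemma~\ref{lem:critical-routine-bounds}(ii) gives $|\E[\hat g_{\mathrm{turn},k}]|=\Theta(1)$ and $|\E[\hat g_{\mathrm{traj},k}]|=\Theta(1)$. It also gives $\mathrm{Var}[\hat g_{\mathrm{turn},k}]=\Theta(1)$ and $\mathrm{Var}[\hat g_{\mathrm{traj},k}]=\Theta(T)$. Hence $\mathrm{SNR}_k(\hat g_{\mathrm{turn}})=\Theta(1)$. The lower bound $m_C>0$ on the turn-level variance from A1 ensures the variance is nonzero, so Definition~\ref{def:snr} applies. Similarly $\mathrm{SNR}_k(\hat g_{\mathrm{traj}})=\Theta(1)/\Theta(\sqrt T)=\Theta(T^{-1/2})$. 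Positivity of the trajectory-level variance comes from the $\Omega(T)$ bound once $T$ is large.

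\textbf{Step 2: combine with explicit constants.} Writing the bounds out, the ratio satisfies
\[
\frac{\mathrm{SNR}_k(\hat g_{\mathrm{turn}})}{\mathrm{SNR}_k(\hat g_{\mathrm{traj}})}
=\frac{|\E[\hat g_{\mathrm{turn},k}]|}{|\E[\hat g_{\mathrm{traj},k}]|}\cdot\sqrt{\frac{\mathrm{Var}[\hat g_{\mathrm{traj},k}]}{\mathrm{Var}[\hat g_{\mathrm{turn},k}]}}.
\]
The first factor lies in $[m_\mu/M_\mu,\,M_\mu/m_\mu]$. The second factor lies between $\sqrt{c_1T/M_C}$ and $\sqrt{c_2T/m_C}$ for the lemma's constants $c_1,c_2$. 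The ratio is therefore $\Theta(\sqrt T)$. Since $\bar C$ is held fixed, $\sqrt T=\sqrt{\bar C}\,\rho^{-1/2}$, which gives $\Theta(\rho^{-1/2})$ with $\sqrt{\bar C}$ absorbed into the constants.

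\textbf{Main obstacle.} The real work sits in the lemma's $\Omega(T)$ lower bound on $\mathrm{Var}[\hat g_{\mathrm{traj},k}]$, if one unpacks it. The plan there is to write $\hat g_{\mathrm{traj},k}=\hat A\Phi_{C,k}+\hat A\Phi_{R,k}$. Then show $\E[\hat A^2\Phi_{R,k}^2]\ge c_R\E[\hat A^2N_R]\ge c_Rc_Ac_NT$, by conditioning on $\mathcal H_C$ and using A2(b,c,a) in turn. Subtracting the squared mean, which is $O(1)$, then gives $\Omega(T)$.

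The variance must also absorb the cross term $2\mathrm{Cov}(\hat A\Phi_{C,k},\hat A\Phi_{R,k})$. By Cauchy--Schwarz this term is at most $O(\sqrt{T})$, because $\mathrm{Var}[\hat A\Phi_{C,k}]\le M_C$. It is therefore dominated for large $T$, giving $\mathrm{Var}\ge\tfrac12 c_Rc_Ac_NT$ eventually. For small $T$ the $\Theta$ statement holds trivially after adjusting constants, so the asymptotic statement is the one to assert.
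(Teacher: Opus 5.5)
Your proposal is correct and matches the paper's proof. The paper also factors the ratio as $\frac{|\E[\hat g_{\mathrm{turn},k}]|}{|\E[\hat g_{\mathrm{traj},k}]|}\sqrt{\mathrm{Var}[\hat g_{\mathrm{traj},k}]/\mathrm{Var}[\hat g_{\mathrm{turn},k}]}$, plugs in the four two-sided bounds from Lemma~\ref{lem:critical-routine-bounds}(ii), and converts $\Theta(\sqrt T)$ into $\Theta(\rho^{-1/2})$ through $T=\bar C/\rho$ at fixed $\bar C$. Your unpacking of the $\Omega(T)$ bound, where Cauchy--Schwarz caps the cross-covariance at $O(\sqrt T)$, is equivalent to the paper's reverse-triangle-inequality step in $L^2$.
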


\paragraph{Sketch.} Both expected gradients receive contributions only from critical states (Prop.~\ref{prop:signal-equiv}) and are bounded above and below by A1. The turn-level estimator zeros routine terms pointwise, so its variance stays $\Theta(1)$. The trajectory-level estimator instead pays $\hat A$ times the accumulated routine score, whose variance is $\Theta(T)$ under A2. Combining gives $\mathrm{SNR}_{\mathrm{turn}}/\mathrm{SNR}_{\mathrm{traj}} = \Theta(\rho^{-1/2})$.

\paragraph{Upper-envelope reading.} The substantive parts of A2, A2(b) and the lower half of A2(c), enter only on the $\Omega(\rho^{-1/2})$ side. When they fail, $\mathrm{Var}[\hat g_{\mathrm{traj}}]$ grows slower than $T$ and the realized ratio falls below $\rho^{-1/2}$, while the $O(\rho^{-1/2})$ upper bound still holds. Two further idealizations point the same way. Exact routine states let turn-level zero routine-turn contributions pointwise; approximate ones leave residual variance. A perfect critic avoids the routine-turn variance that critic error would otherwise reintroduce into turn-level (cf.\ Prop.~\ref{prop:imperfect-critic-scaling}). The $\rho^{-1/2}$ scaling can therefore be read as an upper envelope on the SNR gap achievable in real agentic tasks.

\paragraph{Decision density, not horizon.} The cost of trajectory-level credit assignment relative to turn-level is often attributed to horizon length. The variance gap between estimators is generated by routine turns: by Proposition~\ref{prop:signal-equiv}, routine turns add variance to the trajectory-level estimator only, while the turn-level estimator is unaffected. Trajectory length matters only insofar as it controls the routine-turn count.

The ratio diverges as $\rho \to 0$. The pooled SNR inherits the same scaling under weaker assumptions: the lower bounds need hold at only one coordinate.

\begin{corollary}[Pooled SNR scaling]
\label{cor:pooled-snr}
Assume a perfect critic, fixed parameter dimension $d$, and that A2(a)--(b) hold. Suppose the upper-bound parts of A1 and the upper half of A2(c) hold uniformly at every coordinate $k = 1, \ldots, d$, and that the lower-bound parts of A1 together with the lower half of A2(c) hold at some coordinate $k^\star$. Then
\[
  \frac{\mathrm{SNR}(\hat g_{\mathrm{turn}})}{\mathrm{SNR}(\hat g_{\mathrm{traj}})} = \Theta(\rho^{-1/2}).
\]
Proof in Appendix~\ref{app:pooled-proof}.
\end{corollary}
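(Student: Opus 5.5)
The plan is to bound the numerator and denominator of the pooled SNR separately for each estimator, reducing everything to the coordinate-wise estimates of Lemma~\ref{lem:critical-routine-bounds}. Because $d$ is fixed, sums over coordinates only introduce constants. This is what lets the lower bounds be borrowed from the single coordinate $k^\star$, while the upper bounds use uniformity over all $k$.

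\textbf{Step 1: upper bounds.} Apply Lemma~\ref{lem:critical-routine-bounds}(i) at every coordinate $k=1,\ldots,d$. This is legitimate because the upper-bound parts of A1 and the upper half of A2(c) hold uniformly in $k$. It gives
\[
|\E[\hat g_{\cdot,k}]|=O(1), \qquad \mathrm{Var}[\hat g_{\mathrm{turn},k}]=O(1), \qquad \mathrm{Var}[\hat g_{\mathrm{traj},k}]=O(T).
\]
Summing over $d$ coordinates yields
\[
\|\E[\hat g_{\cdot}]\|\le \sqrt d\,\max_k|\E[\hat g_{\cdot,k}]| = O(1),
\]
together with $\sum_k\mathrm{Var}[\hat g_{\mathrm{turn},k}]=O(1)$ and $\sum_k\mathrm{Var}[\hat g_{\mathrm{traj},k}]=O(T)$.

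\textbf{Step 2: lower bounds via $k^\star$.} The lower-bound parts of A1, A2(a)--(b) and the lower half of A2(c) all hold at $k^\star$, so Lemma~\ref{lem:critical-routine-bounds}(ii) applies there. Since a norm dominates any single coordinate, this gives
\[
\|\E[\hat g_{\cdot}]\|\ge|\E[\hat g_{\cdot,k^\star}]|=\Omega(1).
\]
Since variances are nonnegative, a sum of variances dominates any single one, so
\[
\sum_k\mathrm{Var}[\hat g_{\mathrm{turn},k}]\ge \mathrm{Var}[\hat g_{\mathrm{turn},k^\star}]=\Omega(1), \qquad \sum_k\mathrm{Var}[\hat g_{\mathrm{traj},k}]\ge\Omega(T).
\]
Hence all four pooled quantities are $\Theta(\cdot)$ with the rates $1,1,1,T$.

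\textbf{Step 3: combine.} This step mirrors Proposition~\ref{prop:structural-scaling}. First,
\[
\mathrm{SNR}(\hat g_{\mathrm{turn}})=\Theta(1)/\Theta(1)=\Theta(1), \qquad \mathrm{SNR}(\hat g_{\mathrm{traj}})=\Theta(1)/\Theta(\sqrt T),
\]
so the ratio is $\Theta(\sqrt T)$. In the fixed-$\bar C$ sweep, $T=\bar C/\rho$ with $\bar C$ constant, so $\sqrt T=\Theta(\rho^{-1/2})$.

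\textbf{Main obstacle.} The delicate point is Step 2. I must check that Lemma~\ref{lem:critical-routine-bounds}(ii) at $k^\star$ really uses only hypotheses localized at $k^\star$ (plus the coordinate-free A2(a)--(b)), and not lower bounds at other coordinates.

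I also need to confirm that the $\Omega(T)$ variance lower bound for the trajectory-level estimator at $k^\star$ survives the critical/routine cross term. If the lemma's proof bounds $\mathrm{Var}[\hat A\Phi_{k}]$ from below by $\E[\hat A^2\Phi_{R,k}^2]$ minus lower-order terms, I would cite it as is. Otherwise I would redo it at $k^\star$. To do so, I would expand
\[
\E[\hat A^2\Phi_{R,k^\star}^2]=\E[\hat A^2\,\E[\Phi_{R,k^\star}^2\mid\mathcal H_C]]\ge c_R\,\E[\hat A^2N_R]\ge c_Rc_Ac_NT,
\]
using the $\mathcal H_C$-measurability of $\hat A$. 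I would then control the cross term $2\E[\hat A^2\Phi_C\Phi_R]$ by Cauchy--Schwarz against $\sqrt{M_C\cdot C_RM_A^2T}=O(\sqrt T)$, and the squared mean by $M_\mu^2$. Both are $o(T)$, so the $\Omega(T)$ bound follows.
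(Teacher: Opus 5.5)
Your proposal is correct and matches the paper's proof: you apply Lemma~\ref{lem:critical-routine-bounds}(i) at every coordinate for the $O(\cdot)$ bounds and take the $\Omega(\cdot)$ bounds from part (ii) at $k^\star$ alone. With $d$ fixed, the pooled mean norms and variance sums then have orders $1,1,1,T$, which you substitute into Definition~\ref{def:snr-pooled}. Your cross-term fallback is not needed, because the lemma's proof already gets $\mathrm{Var}[\hat g_{\mathrm{traj},k}]=\Omega(T)$ from hypotheses at $k$ alone, via the reverse triangle inequality on centered $L^2$ norms. That is equivalent to your Cauchy--Schwarz bound, except that the $\Phi_C$ factor should be the second moment $\E[\hat A^2\Phi_{C,k}^2]\le M_C+M_\mu^2$ rather than $M_C$.
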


\subsection{Imperfect Critic Regime}
\label{sec:imperfect-critic}

The previous scaling describes the best-case turn-level estimator, where a perfect critic removes routine-state contributions pointwise. We now ask what remains when the turn-level advantage is noisy. Write the imperfect critic estimate as
$\widetilde A_t = A^{\pi_\theta}(s_t,a_t) + \epsilon_t$,
and define
\[
  \hat g_{\mathrm{turn},\epsilon}
  =
  \sum_{t=0}^{T-1}
  \widetilde A_t u_t .
\]
Equivalently,
\[
  \hat g_{\mathrm{turn},\epsilon}
  =
  \underbrace{
  \sum_{t=0}^{T-1}
  A^{\pi_\theta}(s_t,a_t)u_t
  }_{\hat g_{\mathrm{turn}}^\star}
  +
  \underbrace{
  \sum_{t=0}^{T-1}
  \epsilon_t u_t
  }_{Z_\epsilon},
\]
where $\hat g_{\mathrm{turn}}^\star$ is the perfect critic turn-level estimator and $Z_\epsilon$ is the critic-error contribution.

We make the assumption directly on the gradient contribution of the critic error.

\begin{assumption}[A3 -- Gradient-unbiased critic error]
\label{as:a3}
Let $\mathcal F_\tau$ be the sigma-field generated by the sampled trajectory. The critic-error contribution is conditionally mean-zero,
\[
  \E[Z_\epsilon \mid \mathcal F_\tau] = \mathbf 0,
\]
and its coordinate-wise critic-noise level
\[
  \sigma_{\epsilon,k}^2(T)
  :=
  \frac{1}{T}\,\E\!\left[\mathrm{Var}(Z_{\epsilon,k} \mid \mathcal F_\tau)\right]
\]
is finite for every $k$.
\end{assumption}

Assumption~\ref{as:a3} models imperfect critics through a variance-only error channel. The finite-variance condition is mild: it requires the critic-error contribution to grow at most linearly with horizon. The conditional mean-zero condition is the substantive part: it ensures that critic error preserves the expected policy-gradient signal and enters only as additional stochastic gradient noise around the perfect-critic update. A deterministic learned critic may also introduce systematic approximation bias; such bias is outside Proposition~\ref{prop:imperfect-critic-scaling}, and can misdirect the update rather than lowering its SNR.

\begin{proposition}[Imperfect critic SNR scaling]
\label{prop:imperfect-critic-scaling}
Fix a coordinate $k$ satisfying the standing bounded-advantage condition and Assumptions~\ref{as:a1}--\ref{as:a3}. Then
\[
  \frac{
    \mathrm{SNR}_k(\hat g_{\mathrm{turn},\epsilon})
  }{
    \mathrm{SNR}_k(\hat g_{\mathrm{traj}})
  }
  =
  \Theta\!\left(
    \frac{1}
    {\sqrt{\rho+\sigma_{\epsilon,k}^2(T)}}
  \right),
\]
with constants depending on $\bar C$ and the constants in Assumptions~\ref{as:a1}--\ref{as:a2}.
Proof in Appendix~\ref{app:imperfect-critic-proof}.
\end{proposition}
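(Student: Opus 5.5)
Since $\mathrm{SNR}_k$ is a ratio of a mean to a standard deviation, the argument controls numerator and denominator separately for $\hat g_{\mathrm{turn},\epsilon}$ and reuses Lemma~\ref{lem:critical-routine-bounds} for $\hat g_{\mathrm{traj}}$.

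\emph{Signal.} By A3 and the tower property,
\[
\E[Z_{\epsilon,k}] = \E\bigl[\E[Z_{\epsilon,k}\mid\mathcal F_\tau]\bigr] = 0,
\]
so $\E[\hat g_{\mathrm{turn},\epsilon,k}] = \E[\hat g^\star_{\mathrm{turn},k}]$. By Proposition~\ref{prop:signal-equiv}(ii) and A1, this mean is $\Theta(1)$.

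\emph{Variance.} Apply the law of total variance conditioning on $\mathcal F_\tau$. Since $\hat g^\star_{\mathrm{turn},k}$ is $\mathcal F_\tau$-measurable and $\E[Z_{\epsilon,k}\mid\mathcal F_\tau]=0$,
\[
\mathrm{Var}[\hat g_{\mathrm{turn},\epsilon,k}]
= \mathrm{Var}\bigl[\E[\hat g_{\mathrm{turn},\epsilon,k}\mid\mathcal F_\tau]\bigr] + \E\bigl[\mathrm{Var}(\hat g_{\mathrm{turn},\epsilon,k}\mid\mathcal F_\tau)\bigr]
= \mathrm{Var}[\hat g^\star_{\mathrm{turn},k}] + T\,\sigma^2_{\epsilon,k}(T).
\]
The cross term disappears because $Z_\epsilon$ is conditionally mean zero. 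Lemma~\ref{lem:critical-routine-bounds} (perfect-critic part, together with the lower bound $m_C$ in A1) gives $\mathrm{Var}[\hat g^\star_{\mathrm{turn},k}]=\Theta(1)$. Hence
\[
\mathrm{Var}[\hat g_{\mathrm{turn},\epsilon,k}] = \Theta\bigl(1 + T\sigma^2_{\epsilon,k}(T)\bigr),
\]
with both inequalities explicit. The upper constant is $M_C$ plus $1$; the lower constant is $\min(m_C,1)$.

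\emph{Trajectory side.} Lemma~\ref{lem:critical-routine-bounds} gives $\E[\hat g_{\mathrm{traj},k}]=\Theta(1)$ and $\mathrm{Var}[\hat g_{\mathrm{traj},k}]=\Theta(T)$. The lower bound needs A2(a), A2(b) and the lower half of A2(c); I read the proposition's hypotheses as including the full A1--A2, as in Lemma~\ref{lem:critical-routine-bounds}(ii).

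\emph{Combining.} The ratio of SNRs becomes
\[
\Theta(1)\cdot\sqrt{\frac{T}{1+T\sigma^2_{\epsilon,k}}}
= \Theta\!\left(\frac{1}{\sqrt{1/T+\sigma^2_{\epsilon,k}}}\right).
\]
Since $\rho=\bar C/T$ with $\bar C$ fixed, $1/T=\rho/\bar C$. Therefore $1/T+\sigma^2 = \Theta(\rho+\sigma^2)$, with constants $\min(1,1/\bar C)$ and $\max(1,1/\bar C)$, which yields the stated form.

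\emph{Main obstacle.} The delicate point is ensuring the $\Theta$ constants are uniform in $\sigma^2_{\epsilon,k}(T)$, which may vary arbitrarily with $T$. This works because every bound used is a two-sided inequality of the form $a\le x/(y+z)\le b$ with $y,z\ge0$ and fixed constants $a,b$. No asymptotic simplification such as dropping the $1$ against $T\sigma^2$ should be made. Instead, the inequality
\[
\min(m_C,1)\,(1+x)\le m+x\le\max(M_C,1)\,(1+x)
\]
for $m\in[m_C,M_C]$ and $x\ge0$ should be carried through explicitly. A secondary check is that the conditional variance in A3 is well defined, i.e.\ $Z_{\epsilon,k}$ is square-integrable. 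This follows from finiteness of $\sigma^2_{\epsilon,k}$ together with $\hat g^\star$ having finite variance by A1.
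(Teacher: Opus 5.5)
Your proposal is correct and follows the paper's proof essentially step for step. You use the tower property under A3 to equate the means, and the law of total variance conditioned on $\mathcal F_\tau$ to obtain $\mathrm{Var}[\hat g^\star_{\mathrm{turn},k}] + T\sigma^2_{\epsilon,k}(T)$. You then apply Lemma~\ref{lem:critical-routine-bounds} for both the perfect-critic turn-level and the trajectory-level bounds, and finish with the substitution $T=\bar C/\rho$. Your explicit constant tracking, e.g.\ $\min(m_C,1)(1+x)\le m+x\le\max(M_C,1)(1+x)$, makes rigorous the uniformity in $\sigma^2_{\epsilon,k}(T)$ that the paper leaves implicit when it writes $\Theta(1)+T\sigma^2_{\epsilon,k}(T)=\Theta(1+T\sigma^2_{\epsilon,k}(T))$.
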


\paragraph{Interpretation.}
Critic noise reintroduces routine-turn variance into the turn-level estimator: at routine states, $\widetilde A_t u_t = \epsilon_t u_t$ is mean-zero but no longer pointwise zero. The turn-level gain therefore depends on which dominates, $\rho$ or $\sigma_{\epsilon,k}^2$. When $\sigma_{\epsilon,k}^2\ll\rho$, the perfect-critic scaling $\Theta(\rho^{-1/2})$ is recovered; when $\sigma_{\epsilon,k}^2\gg\rho$, the ratio flattens to a horizontal asymptote $\Theta(\sigma_{\epsilon,k}^{-1})$ set by the critic-error scale, independent of $\rho$. Low decision density makes turn-level credit assignment potentially valuable, but \textbf{only when critic error stays below the decision-density scale}.

\section{Experimental Setup}
\label{sec:setup}
\label{sec:environment}

Our theory predicts that $\rho$ governs the gradient SNR ratio between turn-level and trajectory-level estimators (Section~\ref{sec:dilution}); prior work relates policy-gradient SNR to learning performance under fixed evaluation budgets \citep{roberts2008snr}. Together they motivate two empirical predictions, summarized below.

\vspace{-8pt}

\begin{center}
\begin{tikzpicture}[
    every node/.style={font=\small},
    box/.style={draw=gray!60, rounded corners=2pt, inner sep=5pt, fill=gray!8, minimum height=8mm},
    theory/.style={->, >=stealth, thick, blue!55!black},
    expt/.style={->, >=stealth, thick, red!70!black, dashed},
    edgelabel/.style={font=\scriptsize\itshape, inner sep=2pt}
]
  \node[box] (rho) {$\rho$};
  \node[box, right=2cm of rho] (snr) {$\mathrm{SNR}$};
  \node[box, right=2cm of snr] (sc)  {training-step efficiency};
  \draw[theory] (rho) -- node[edgelabel, above] {Sec.~\ref{sec:dilution}} (snr);
  \draw[theory] (snr) -- node[edgelabel, above] {Sec.~\ref{sec:snr}} (sc);
  \draw[expt] (rho) to[bend right=30] node[edgelabel, below] {Exp.~1} (snr);
  \draw[expt] (rho) to[bend right=35] node[edgelabel, below] {Exp.~2} (sc);
\end{tikzpicture}
\end{center}

\vspace{-8pt}

We test these mechanism-level predictions in \emph{Diluted Doors}, a deterministic MDP where $\rho$ is set exactly and routine transitions are reward-equivalent by construction. Experiment~1 quantifies the gradient SNR ratio at initialization; Experiment~2 quantifies the resulting training-step efficiency gap as a function of $\rho$, with the SNR-to-training-step efficiency link taken from \citet{roberts2008snr}. The policy is a small causal transformer \citep{vaswani2017attention}; full details are in Appendix~\ref{app:experiments}.

\vspace{-8pt}

\paragraph{Environment design.}
Diluted Doors is a balanced tree alternating $C$ critical states (only one of $K_C$ doors is correct, fixed across episodes) and $L$ routine states between consecutive critical states. Every root-to-leaf trajectory has length $T = C(1+L)$ and visits exactly $C$ critical states, so $\rho = 1/(1+L)$ is exact; varying $L$ at fixed $C$ sweeps $\rho$ without changing the number of critical decisions. At each routine state, the $K_R$ outgoing paths lead to isomorphic subtrees, so the distribution of future returns is identical across paths and Definition~\ref{def:routine} holds exactly. Any effect on the SNR ratio is therefore attributable to the routine-state mechanism rather than to hidden signal at routine states.

\vspace{-8pt}

\paragraph{Experiment 1 --- quantifying the predicted SNR gap.}
The two gradient estimators differ only in the advantage term: \emph{trajectory-level} uses a group-relative trajectory scalar (GRPO-like), and \emph{turn-level} uses a per-turn critic-based advantage; at initialization, $V^{\pi_\theta}$ admits a closed-form expression under the near-uniform policy, so the turn-level estimator uses this analytical critic. For each $L$, both estimators are evaluated on the same independently sampled batches across seeds, and the Bessel-corrected pooled SNR is estimated from these batch-gradient samples (Appendix~\ref{app:stats}). We report the paired ratio $R_{\mathrm{SNR}}(\rho) = \mathrm{SNR}_{\mathrm{turn}}/\mathrm{SNR}_{\mathrm{traj}}$, aggregated across seeds by IQM with $95\%$ bootstrap confidence intervals, and fit the power law $R(\rho) = C \rho^\alpha$ in log-log coordinates.

\vspace{-8pt}

\paragraph{Experiment 2 --- SNR-to-training-step efficiency link.}
The turn-level estimator uses a Monte Carlo critic with $k$ rollouts per visited state. This does not fully achieve the perfect-critic regime, nor does it make critic quality constant across $L$, but it gives a controlled critic and avoids learned-critic training dynamics. We report iterations-to-threshold $\tau$ (the first iteration whose per-iteration mean reward reaches $0.9$) and the paired-seed speedup $\tau_{\mathrm{traj}}/\tau_{\mathrm{turn}}$, aggregated as above. We fit a power law to the speedup to measure its scaling; \textbf{$\alpha = -1/2$ is not expected here}, as these are different quantities from the SNR. AUC results, sweep grids, and seed counts are in Appendices~\ref{app:supplementary-auc} and~\ref{app:grids}.

\section{Results}
\label{sec:results}

\subsection{Experiment 1: gradient SNR scaling}
\label{sec:exp-init-snr}

The initialization SNR ratio is reported in Figure~\ref{fig:snr-vs-rho}. Fitting $R_{\mathrm{SNR}}(\rho)=C\rho^\alpha$ gives $\hat{\alpha}=-0.5084$ with $95\%$ bootstrap confidence interval $[-0.5118,-0.5054]$ and log-space $R^2=0.9994$. The fitted exponent is within $0.0084$ of the predicted value $-1/2$; constraining $\alpha=-1/2$ gives a nearly identical fit, with $R^2=0.9991$ (bootstrap test details in Appendix~\ref{app:supplementary-auc}). Corresponding curves under varying critic noise are reported in Appendix~\ref{app:critic-frontier}.

\paragraph{Interpretation.} Compared to the trajectory-level estimator, the perfect-critic turn-level estimator has $17.41$x higher SNR at $\rho=1/51$, and only $2.30$x\footnote{The ratio is not equal to $1$ at $\rho=1$ because the perfect-critic turn-level estimator has lower variance than the trajectory-level estimator even when no routine states are inserted.} SNR at $\rho=1$. These results support a causal effect of decision density on the gradient SNR ratio in this environment.

\vspace{-8pt}

\begin{figure}[h]
\centering
\safeincludegraphics[width=\linewidth]{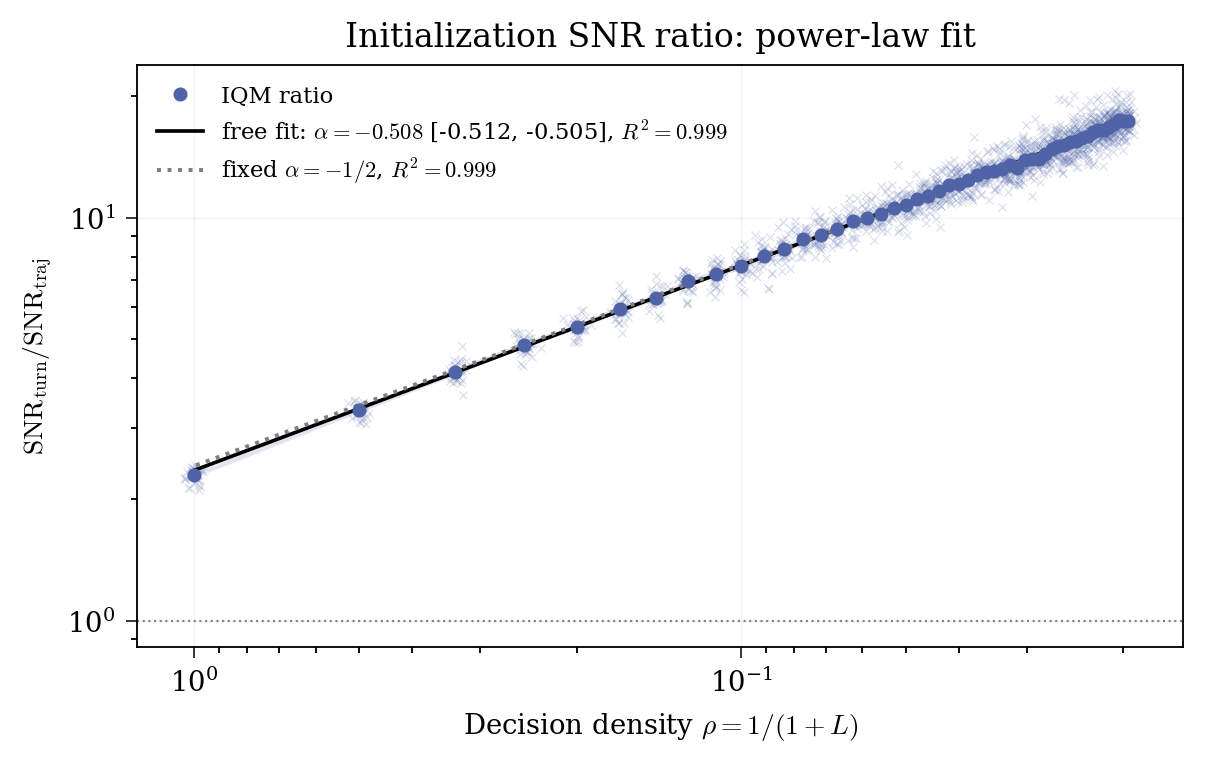}
\caption{Paired initialization SNR ratio $\mathrm{SNR}_{\mathrm{turn}}/\mathrm{SNR}_{\mathrm{traj}}$ vs.\ decision density, log scale. Points are per-$L$ IQMs.}
\label{fig:snr-vs-rho}
\end{figure}

\subsection{Experiment 2: training-step efficiency}
\label{sec:exp-training}

The threshold-speedup ratio as a function of $\rho$ is reported in Figure~\ref{fig:training-ratio-power}. At $\rho=1$, the two estimators reach the performance threshold in approximately the same number of training steps (paired-seed IQM $1.016$, interval $[0.961, 1.081]$). At $\rho=1/51$, the trajectory-level estimator takes roughly twice as many training steps as the turn-level estimator (paired-seed IQM $2.139$, interval $[1.772, 2.525]$). The fit gives $\hat\alpha=-0.19$ ($R^2=0.948$), well separated from zero and confirming that training-step efficiency depends on $\rho$. AUC and individual iters curves are in App.~\ref{app:iters-curve},~\ref{app:supplementary-auc},~\ref{app:supplementary-association}.

\paragraph{Interpretation.} Reducing decision density while keeping the critical problem fixed widens the training-step efficiency gap between estimators, as predicted. These results support a causal effect of decision density on training-step efficiency in this environment.

\begin{figure}[h]
\centering
\safeincludegraphics[width=\linewidth]{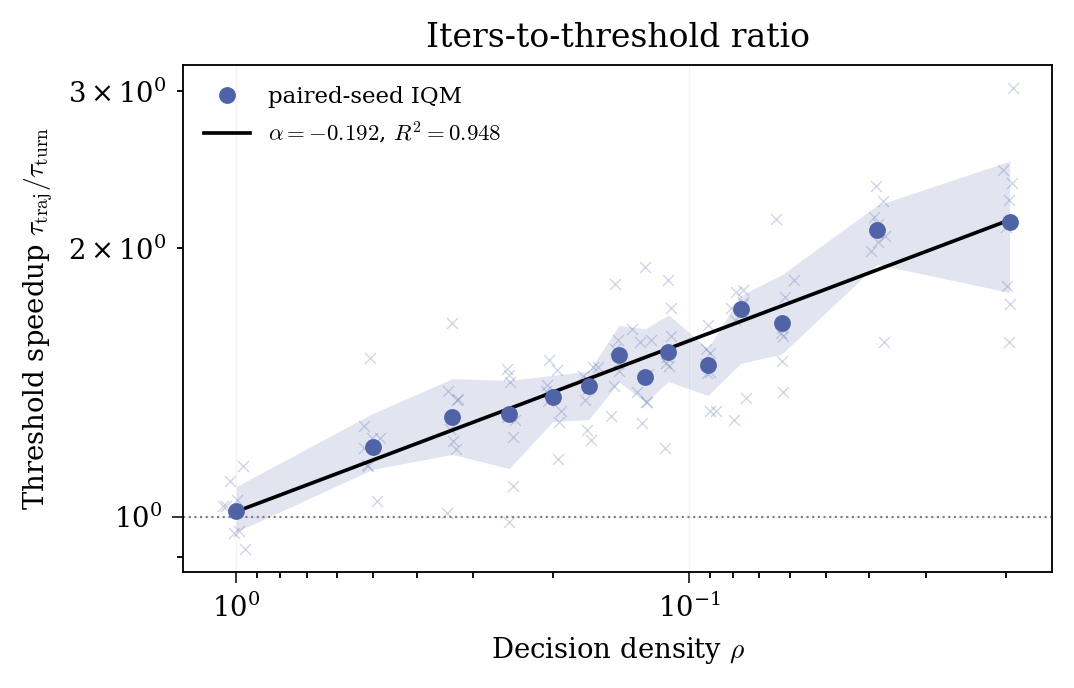}
\caption{Threshold speedup $\tau_{\mathrm{traj}}/\tau_{\mathrm{turn}}$ vs.\ decision density, log scale.}
\label{fig:training-ratio-power}
\end{figure}

\vspace{-8pt}

\section{Related Work}

\paragraph{Trajectory-level optimization for LLMs.}
Standard LLM post-training methods often assign supervision at the level of a complete response: RLHF for instruction following \citep{ouyang2022training}, DPO over preferred/dispreferred completions \citep{rafailov2023direct}, and GRPO with group-normalized trajectory rewards \citep{shao2024deepseekmath}. GRPO, introduced in DeepSeekMath, is especially relevant because it avoids training a value function by using group-level Monte Carlo baselines. Our work studies the statistical cost of trajectory-level credit assignment in multi-turn settings as a function of decision density.

\vspace{-4pt}

\paragraph{Turn-level credit assignment for LLM agents.}
Recent work shows that multi-turn agent RL benefits from credit assignment below the trajectory level. GiGPO \citep{feng2025gigpo} adds critic-free step-level relative advantages via anchor-state grouping, while turn-level reward design \citep{wei2025reinforcing} introduces dense turn-level rewards for multi-turn search agents. Turn-PPO \citep{li2026turnppo} further diagnoses direct multi-turn GRPO as unstable, attributing failures to uniform advantages across heterogeneous turns and high-variance sample-based estimates. These studies provide empirical evidence and algorithmic remedies for the limitations of trajectory-level credit assignment. Our contribution is complementary: we isolate \emph{signal dilution} as a statistical mechanism behind this gap and show how its SNR cost scales with decision density $\rho$.

\vspace{-4pt}

\paragraph{Gradient signal-to-noise ratio.}
Roberts and Tedrake \citep{roberts2008snr} argue that policy-gradient variance should be measured relative to the useful gradient signal, not in isolation. They show empirically that policy-gradient SNR predicts learning performance under fixed evaluation budgets. We use the same lens to study a different source of noise: routine turns that add trajectory-level gradient variance without adding expected signal. Our parameter-wise SNR is the square root of the parameter-wise GSNR used in supervised learning \citep{liu2020gsnr}.

\section{Discussion}
\label{sec:discussion}

We trace the turn-level versus trajectory-level gap to a single mechanism. Routine turns (Definition~\ref{def:routine}) contribute no expected signal: the perfect-critic turn-level estimator removes their contribution pointwise, while the trajectory-level estimator cancels it only in expectation (Proposition~\ref{prop:signal-equiv}). Under Assumptions~\ref{as:a1}--\ref{as:a2}, the residual variance accumulates linearly in the routine-turn count, yielding the $\Theta(\rho^{-1/2})$ SNR advantage of Proposition~\ref{prop:structural-scaling}. Proposition~\ref{prop:imperfect-critic-scaling} shows that gradient-unbiased critic error reintroduces routine-turn variance into the turn-level estimator, so the $\rho$-scaling depends on the relative magnitudes of decision density and critic-error variance. Diluted Doors recovers the predicted scaling under an intervention on $\rho$, and training efficiency gaps widen as $\rho$ decreases.

The $\rho^{-1/2}$ law should be read as an idealized upper envelope: real environments weaken the turn-level advantage through approximately routine states, near-deterministic policies on routine actions, and learned-critic error. The analysis nuances prior findings on turn-level credit assignment~\citep{feng2025gigpo, li2026turnppo, wei2025reinforcing}: the gap should track decision density rather than horizon alone, so at high $\rho$, even on long horizons, the cost of a critic is unlikely to be justified. Decision density thus serves both as a diagnostic for when a critic is worth its cost and as a target for algorithm design that addresses signal dilution without paying the full critic price.

Three limitations bound these claims. First, no experiments are run on real LLM agents; the upper-envelope reading should carry over since the mechanism is environment-independent, but the realized gap and the magnitude of $\rho$ in such systems have not been measured. Second, routine states are policy-dependent and not directly observable. Decision density can still be used conceptually, and a quantitative route is to probe a base agent through rollouts and estimate which action alternatives shift the downstream return distribution. Third, extending the definition to approximately routine states, e.g.\ by thresholding a distance between conditional return distributions, is the most important theoretical gap left open.

Several directions follow. The most direct is to estimate decision density, SNR ratios, and training gaps on agentic benchmarks with learned critics. A second is a soft-critical-state theory grounded in approximate distributional invariance, connected to the imperfect-critic regime. The mechanism also suggests critic-free variants that suppress routine-turn noise once routine states can be detected: lowering sampling temperature at these turns, or masking their score contributions.

\section{Conclusion}

Our findings support that multi-turn credit assignment is governed not by horizon alone, but by \emph{decision density}: the fraction of turns whose actions affect the return distribution. When $\rho$ is low, trajectory-level estimators attach a scalar advantage to many routine turns, adding variance without signal; turn-level estimators avoid this by cancelling routine contributions pointwise rather than only in expectation. This signal dilution mechanism yields an upper-envelope $\Theta(\rho^{-1/2})$ SNR advantage for turn-level credit assignment. The practical message is diagnostic: at high $\rho$, GRPO-like methods can avoid critic cost while remaining competitive; turn-level credit assignment becomes most useful when consequential decisions are sparse and critic error stays below the decision-density scale. A natural next step is to measure $\rho$ in real agentic tasks and design cheaper ways to suppress routine-turn noise.

\section*{Impact Statement}
This paper presents work whose goal is to advance the field of Machine Learning. There are many potential societal consequences of our work, none which we feel must be specifically highlighted here.

\bibliographystyle{icml2026}

\appendix

\section{Structural existence of a gap}
\label{app:structural-proof}
\begin{proof}[Proof of Lemma~\ref{lem:critical-routine-bounds}]
Fix the coordinate $k$ and suppress it. Decompose the trajectory-level estimator into critical and routine contributions:
\[
  \hat g_{\mathrm{traj}}=\hat A\,\Phi_C + \hat A\,\Phi_R .
\]

By Proposition~\ref{prop:signal-equiv}, the routine part of the trajectory-level estimator has zero mean:
\[
  \E[\hat A\,\Phi_R]
  =
  0.
\]
By A2, $\hat A$ is $\mathcal H_C$-measurable, so $\hat A^2$ is $\mathcal H_C$-measurable. Using the tower property and the two-sided A2(c),
\[
  \begin{aligned}
    \mathrm{Var}[\hat A\,\Phi_R]
    &= \E[\hat A^2\Phi_R^2]\\
    &= \E\!\left[\hat A^2\,\E[\Phi_R^2\mid\mathcal H_C]\right],
  \end{aligned}
\]
\[
  \begin{aligned}
    c_R\,\E[\hat A^2 N_R]
    &\le \mathrm{Var}[\hat A\,\Phi_R]\\
    &\le C_R\,\E[\hat A^2 N_R].
  \end{aligned}
\]
For the lower bound, A2(b) and A2(a) give $\E[\hat A^2 N_R]\ge c_A\,\E[N_R]\ge c_A c_N\, T$. For the upper bound, $|\hat A|\le M_A$ by the bounded-advantage assumption and $N_R\le T$ give $\E[\hat A^2 N_R]\le M_A^2\, T$. Hence
\[
  \begin{aligned}
    c_R c_A c_N\, T
    &\le \mathrm{Var}[\hat A\,\Phi_R]
    \le C_R M_A^2\, T,\\
    \text{i.e.,}\qquad
    \mathrm{Var}[\hat A\,\Phi_R]
    &= \Theta(T).
  \end{aligned}
\]

By Prop.~\ref{prop:signal-equiv}, the expected turn-level and trajectory-level gradients reduce to their critical-state contributions, which A1 bounds away from $0$ and $\infty$:
\[
  m_\mu \le |\E[\hat g_{\mathrm{turn}}]| \le M_\mu,
  \qquad
  m_\mu \le |\E[\hat g_{\mathrm{traj}}]| \le M_\mu .
\]

It remains to compare variances. By A1,
\[
  m_C\;\le\;\mathrm{Var}[\hat g_{\mathrm{turn}}]\le M_C,
  \qquad
  \mathrm{Var}[\hat A\,\Phi_C]\le M_C,
\]
while $\mathrm{Var}[\hat A\,\Phi_R]=\Theta(T)$ from the previous display. Writing the standard deviation of the sum as an $L^2$ norm of the centered variables,
\[
  \begin{aligned}
    &\sqrt{\mathrm{Var}[\hat A\,\Phi_C+\hat A\,\Phi_R]}\\
    &\qquad =
    \left\lVert
      (\hat A\,\Phi_C - \E[\hat A\,\Phi_C])
      + (\hat A\,\Phi_R - \E[\hat A\,\Phi_R])
    \right\rVert_2,
  \end{aligned}
\]
For sufficiently large $T$, the reverse and forward triangle inequalities, together with $c_R c_A c_N\, T\le \mathrm{Var}[\hat A\,\Phi_R]\le C_R M_A^2\, T$ and $\mathrm{Var}[\hat A\,\Phi_C]\le M_C$, bound it on both sides:
\[
  \begin{aligned}
    \sqrt{c_R c_A c_N\, T} - \sqrt{M_C}
    &\le
    \sqrt{\mathrm{Var}[\hat A\,\Phi_C+\hat A\,\Phi_R]}\\
    &\le
    \sqrt{C_R M_A^2\, T} + \sqrt{M_C}\\
    &= \Theta(\sqrt T).
  \end{aligned}
\]
Hence $\mathrm{Var}[\hat g_{\mathrm{traj}}]=\Theta(T)$. The one-coordinate bounds are therefore
\[
  \begin{aligned}
    |\E[\hat g_{\mathrm{turn}}]|
    &= \Theta(1),\\
    |\E[\hat g_{\mathrm{traj}}]|
    &= \Theta(1),\\
    \mathrm{Var}[\hat g_{\mathrm{turn}}]
    &= \Theta(1),\\
    \mathrm{Var}[\hat g_{\mathrm{traj}}]
    &= \Theta(T).
  \end{aligned}
\]
Inspecting the derivations above, the upper bounds use only the upper-bound parts of A1 and the upper half of A2(c), while the lower bounds additionally invoke A1's lower-bound parts together with A2(a), A2(b), and the lower half of A2(c).
\end{proof}

\begin{proof}[Proof of Proposition~\ref{prop:structural-scaling}]
Combining the bounds from Lemma~\ref{lem:critical-routine-bounds},
\[
  \begin{aligned}
    \frac{\mathrm{SNR}_k(\hat g_{\mathrm{turn}})}
         {\mathrm{SNR}_k(\hat g_{\mathrm{traj}})}
    &=
    \frac{|\E[\hat g_{\mathrm{turn}}]|}{|\E[\hat g_{\mathrm{traj}}]|}\\
    &\quad{}\times
    \sqrt{\frac{\mathrm{Var}[\hat g_{\mathrm{traj}}]}
               {\mathrm{Var}[\hat g_{\mathrm{turn}}]}}\\
    &=
    \Theta(\sqrt T) = \Theta(\rho^{-1/2}).
  \end{aligned}
\]
\end{proof}

\subsection{Pooled SNR scaling}
\label{app:pooled-proof}

\begin{proof}[Proof of Corollary~\ref{cor:pooled-snr}]
Let $\mathcal K=\{1,\ldots,d\}$ and write
\[
  \begin{aligned}
    \mu_{\bullet,k}
    &= \E[\hat g_{\bullet,k}],\\
    \sigma_{\bullet,k}^2
    &= \mathrm{Var}[\hat g_{\bullet,k}],\\
    \bullet
    &\in\{\mathrm{turn},\mathrm{traj}\}.
  \end{aligned}
\]
By Lemma~\ref{lem:critical-routine-bounds}, the uniform upper conditions give, for every $k\in\mathcal K$,
\[
  \begin{aligned}
    |\mu_{\bullet,k}|
    &= O(1),\\
    \sigma_{\mathrm{turn},k}^2
    &= O(1),\\
    \sigma_{\mathrm{traj},k}^2
    &= O(T),
  \end{aligned}
\]
while the lower conditions at $k^\star$ give
\[
  \begin{aligned}
    |\mu_{\bullet,k^\star}|
    &= \Omega(1),\\
    \sigma_{\mathrm{turn},k^\star}^2
    &= \Omega(1),\\
    \sigma_{\mathrm{traj},k^\star}^2
    &= \Omega(T).
  \end{aligned}
\]
Since $d$ is fixed,
\[
  \begin{aligned}
    \|\E\hat g_\bullet\|^2
    &=
    \sum_{k\in\mathcal K}\mu_{\bullet,k}^2
    =
    \Theta(1),\\
    \sum_{k\in\mathcal K}\sigma_{\mathrm{turn},k}^2
    &=
    \Theta(1),\\
    \sum_{k\in\mathcal K}\sigma_{\mathrm{traj},k}^2
    &=
    \Theta(T).
  \end{aligned}
\]
Substituting into Definition~\ref{def:snr-pooled},
\[
  \begin{aligned}
    \frac{\mathrm{SNR}(\hat g_{\mathrm{turn}})}
         {\mathrm{SNR}(\hat g_{\mathrm{traj}})}
    &=
    \frac{\|\E\hat g_{\mathrm{turn}}\|}
         {\|\E\hat g_{\mathrm{traj}}\|}\\
    &\quad{}\times
    \sqrt{\frac{\sum_k\sigma_{\mathrm{traj},k}^2}
               {\sum_k\sigma_{\mathrm{turn},k}^2}}\\
    &=
    \Theta(1)\cdot \sqrt{\frac{\Theta(T)}{\Theta(1)}}\\
    &=
    \Theta(\sqrt T)
    =
    \Theta(\rho^{-1/2}).
  \end{aligned}
\]
\end{proof}

\subsection{Imperfect critic proof}
\label{app:imperfect-critic-proof}

\begin{proof}[Proof of Proposition~\ref{prop:imperfect-critic-scaling}]
Write
\[
  \hat g_{\mathrm{turn},\epsilon,k}
  =
  \hat g_{\mathrm{turn},k}^\star
  +
  Z_{\epsilon,k}.
\]
By Assumption~\ref{as:a3},
\[
  \E[Z_{\epsilon,k}\mid\mathcal F_\tau]=0.
\]
Since $\hat g_{\mathrm{turn},k}^\star$ is determined by the sampled trajectory,
\[
  \E[
    \hat g_{\mathrm{turn},\epsilon,k}
    \mid
    \mathcal F_\tau
  ]
  =
  \hat g_{\mathrm{turn},k}^\star.
\]
Taking expectation gives
\[
  \E[
    \hat g_{\mathrm{turn},\epsilon,k}
  ]
  =
  \E[
    \hat g_{\mathrm{turn},k}^\star
  ].
\]
By Lemma~\ref{lem:critical-routine-bounds}, under Assumptions~\ref{as:a1}--\ref{as:a2},
\[
  |\E[\hat g_{\mathrm{turn},k}^\star]|
  =
  \Theta(1),
\]
so
\[
  |\E[\hat g_{\mathrm{turn},\epsilon,k}]|
  =
  \Theta(1).
\]

For the variance, apply the law of total variance:
\[
  \mathrm{Var}[
    \hat g_{\mathrm{turn},\epsilon,k}
  ]
  =
  \mathrm{Var}
  \left(
    \E[
      \hat g_{\mathrm{turn},\epsilon,k}
      \mid
      \mathcal F_\tau
    ]
  \right)
\]
\[
  +\E
  \left[
    \mathrm{Var}
    \left(
      \hat g_{\mathrm{turn},\epsilon,k}
      \mid
      \mathcal F_\tau
    \right)
  \right].
\]
The first term is
\[
  \mathrm{Var}
  \left(
    \E[
      \hat g_{\mathrm{turn},\epsilon,k}
      \mid
      \mathcal F_\tau
    ]
  \right)
  =
  \mathrm{Var}[
    \hat g_{\mathrm{turn},k}^\star
  ].
\]
Again by Lemma~\ref{lem:critical-routine-bounds},
\[
  \mathrm{Var}[
    \hat g_{\mathrm{turn},k}^\star
  ]
  =
  \Theta(1).
\]
For the second term, $\hat g_{\mathrm{turn},k}^\star$ is fixed conditional on $\mathcal F_\tau$, so
\[
  \mathrm{Var}
  \left(
    \hat g_{\mathrm{turn},\epsilon,k}
    \mid
    \mathcal F_\tau
  \right)
  =
  \mathrm{Var}
  \left(
    Z_{\epsilon,k}
    \mid
    \mathcal F_\tau
  \right).
\]
By the definition of $\sigma_{\epsilon,k}^2(T)$ in Assumption~\ref{as:a3},
\[
  \E
  \left[
    \mathrm{Var}
    \left(
      Z_{\epsilon,k}
      \mid
      \mathcal F_\tau
    \right)
  \right]
  =
  T\sigma_{\epsilon,k}^2(T).
\]
Therefore,
\[
  \mathrm{Var}[
    \hat g_{\mathrm{turn},\epsilon,k}
  ]
  =
  \Theta(1)
  +
  T\sigma_{\epsilon,k}^2(T)
  =
  \Theta\!\left(
    1+T\sigma_{\epsilon,k}^2(T)
  \right).
\]

The existing structural result gives
\[
  |\E[\hat g_{\mathrm{traj},k}]|
  =
  \Theta(1),
  \qquad
  \mathrm{Var}[
    \hat g_{\mathrm{traj},k}
  ]
  =
  \Theta(T).
\]
Thus
\[
  \begin{aligned}
  \frac{
    \mathrm{SNR}_k(\hat g_{\mathrm{turn},\epsilon})
  }{
    \mathrm{SNR}_k(\hat g_{\mathrm{traj}})
  }
  &=
  \frac{
    |\E[\hat g_{\mathrm{turn},\epsilon,k}]|
  }{
    |\E[\hat g_{\mathrm{traj},k}]|
  }
  \sqrt{
    \frac{
      \mathrm{Var}[
        \hat g_{\mathrm{traj},k}
      ]
    }{
      \mathrm{Var}[
        \hat g_{\mathrm{turn},\epsilon,k}
      ]
    }
  } \\
  &=
  \Theta(1)
  \sqrt{
    \frac{
      \Theta(T)
    }{
      \Theta(1+T\sigma_{\epsilon,k}^2(T))
    }
  } \\
  &=
  \Theta\!\left(
    \sqrt{
      \frac{T}
      {1+T\sigma_{\epsilon,k}^2(T)}
    }
  \right).
  \end{aligned}
\]
Finally, using $T=\bar C/\rho$,
\[
  \sqrt{
    \frac{T}
    {1+T\sigma_{\epsilon,k}^2(T)}
  }
  =
  \sqrt{
    \frac{\bar C}
    {\rho+\bar C\sigma_{\epsilon,k}^2(T)}
  }
  =
  \Theta\!\left(
    \frac{1}
    {\sqrt{\rho+\sigma_{\epsilon,k}^2(T)}}
  \right),
\]
since $\bar C$ is fixed.
\end{proof}

\subsection{Asymptotic form}
\label{app:scaling-asymptotic}

Under the hypotheses of Proposition~\ref{prop:structural-scaling},
\begin{equation}
  \begin{aligned}
    \frac{\mathrm{SNR}_{\mathrm{turn}}}
         {\mathrm{SNR}_{\mathrm{traj}}}
    &=
    \Theta(\sqrt T)
    =
    \Theta(\rho^{-1/2}),
  \end{aligned}
  \label{eq:scaling-master}
\end{equation}
in the limit $T \to \infty$ at fixed $\bar C$.

\section{Diluted Doors environment and training details}
\label{app:experiments}

This appendix expands the experimental setup of Section~\ref{sec:setup}.

\subsection{Environment design}

Diluted Doors is a deterministic tree MDP. The agent observes its current state (position in the tree and available actions) and chooses one action per turn, selecting a door at choice states or a path at routine states. The tree structure and reward function are fixed across episodes; the agent must discover the reward structure through repeated interaction.

\paragraph{Reward.} The reward is sparse and additive, revealed only at the end of the episode and equal to the fraction of correct choices at critical states:
\[
  R = \frac{1}{C} \sum_{c=1}^{C} \mathbf{1}[\text{agent chose the correct door at depth } c].
\]
The additive form ensures that a wrong choice at depth $c$ does not eliminate subsequent critical states: every trajectory visits exactly $C$ critical states regardless of actions taken.

\paragraph{Tree structure.} Every root-to-leaf path has length $T = C(1+L)$ with $C$ choice states and $C\cdot L$ routine states. Because the tree is balanced and routine subtrees are isomorphic, every trajectory visits exactly $C$ critical states, so $\bar C = C$ and $\rho = 1/(1+L)$ exactly.

\subsection{Concrete episode}

For $C=2$, $K_C=2$, $K_R=2$, $L=1$ ($T=4$, $\rho=1/2$), with correct doors $\{1, 2\}$ at depths $\{1, 2\}$:

\begin{center}
\small
\resizebox{\columnwidth}{!}{%
\begin{tabular}{clll}
\toprule
\textbf{Turn} & \textbf{Type} & \textbf{State shows} & \textbf{Agent picks} \\
\midrule
1 & Critical & \texttt{door\_1, door\_2} & \texttt{door\_1} (correct) \\
2 & Routine & \texttt{path\_1, path\_2} & \texttt{path\_2} (irrelevant) \\
3 & Critical & \texttt{door\_1, door\_2} & \texttt{door\_1} (wrong; correct is \texttt{door\_2}) \\
4 & Routine & \texttt{path\_1, path\_2} & \texttt{path\_1} (irrelevant) \\
\midrule
\multicolumn{4}{l}{\textbf{Terminal reward:} $R = (1 + 0)/2 = 1/2$.} \\
\bottomrule
\end{tabular}
}
\end{center}

\subsection{Tunable parameters}

\begin{center}
\resizebox{\columnwidth}{!}{%
\begin{tabular}{ll}
\toprule
\textbf{Parameter} & \textbf{} \\
\midrule
$C$ & Number of critical states \\
$L$ & Routine states per critical state \\
$K_C$ & Doors per critical state \\
$K_R$ & Paths per routine state \\
$T = C(1+L)$ & Total turns \\
$\rho = 1/(1+L)$ & Decision density \\
\bottomrule
\end{tabular}
}
\end{center}

\subsection{Policy class and initialization}
\label{app:policy-class}

The policy $\pi_\theta$ is a small causal Transformer \citep{vaswani2017attention} ($30$k
parameters). Actions are represented as tokens: the vocabulary has
$K{+}1$ entries, one per action plus a \texttt{START} symbol. At
turn $t$, the input sequence is
$[\texttt{START}, a_0, \ldots, a_{t-1}]$, and a linear head on the
final position produces a distribution over the $K$ next actions.

The network uses two pre-norm Transformer blocks with model
dimension $32$, two-head self-attention, and a feed-forward
sub-block of hidden width $128$. Token and positional embeddings
are learned.

Weight initialization is Xavier-normal with gain $0.1$ on linear layers and $\mathcal{N}(0, 0.02^2)$ on embeddings. At initialization, the policy is close to uniform across all states: the normalized action entropy
$H(\pi(\cdot{\mid}s))/\log K$ is close to $1$ and the mean
per-step KL divergence to the uniform distribution is close to $0$.

\subsection{On-policy training loop}

One rollout group of $G$ trajectories drives one gradient step. We use the simplified gradients of Section~\ref{sec:estimators} directly: no importance ratio, no clipping, and no KL penalty. The two estimators differ only in the advantage plugged into Eq.~\eqref{eq:ppo-grad} or Eq.~\eqref{eq:grpo-grad}.

\paragraph{Trajectory-level estimator.} We use the group-normalized advantage of Eq.~\eqref{eq:grpo-advantage}, broadcast to every turn of trajectory $i$. A single gradient step is taken per rollout group; multiple epochs would require an importance-ratio correction whose interaction with the clip is a separate question from the gradient-form one we isolate.

\paragraph{Turn-level estimator with Monte~Carlo $V^\pi$ critic.} For each visited state $s_t^{(i)}$, we draw $k$ fresh rollouts under the current policy and set $\hat V(s_t^{(i)})$ to the mean of their returns. Under deterministic transitions and terminal reward, the TD(0) advantage
\begin{equation}
  \begin{aligned}
    \hat A_t
    &= \hat V(s_{t+1}) - \hat V(s_t)
    \quad (t < T-1),\\
    \hat A_{T-1}
    &= R - \hat V(s_{T-1})
  \end{aligned}
  \label{eq:td0-mc}
\end{equation}
is consistent for $A^{\pi_\theta}$, so at routine states $\hat A_t \to 0$ as $k \to \infty$, matching Proposition~\ref{prop:signal-equiv}(ii). Since $\hat V(s_t)$ and $\hat V(s_{t+1})$ are computed from disjoint rollout batches, they are independent and the advantage variance scales as
\[
  \mathrm{Var}[\hat A_t]
  \;=\;
  \frac{\mathrm{Var}_\tau[R(\tau) \mid s_{t+1}, \pi_\theta] + \mathrm{Var}_\tau[R(\tau) \mid s_t, \pi_\theta]}{k}.
\] We use Monte Carlo rather than a learned critic so that critic quality is a known, controllable quantity ($k$); a learned critic would conflate the structural gap with critic-training dynamics.

\paragraph{Optimizer.} AdamW \citep{loshchilov2019decoupled} with learning rate $10^{-3}$, linear warmup over the first $10\%$ of iterations, gradient clipping at $1.0$ \citep{pascanu2013difficulty}, no entropy regularization. Each run comprises $500$ iterations.

\subsection{Evaluation statistics}
\label{app:stats}

For each $L$, scalar metrics are aggregated across seeds using the interquartile mean (IQM) \citep{agarwal2021deep}. Confidence intervals are percentile bootstrap intervals with $5000$ resamples \citep{efron1993bootstrap}. For paired ratios, the ratio is computed seed-by-seed before aggregating across seeds.

Given per-$L$ aggregate ratios $R(\rho)$, we fit
\[
  R(\rho) = C \rho^\alpha
\]
by ordinary least squares. Reported $R^2$ values are computed in log-space. Confidence intervals for $\alpha$ are obtained by bootstrapping seeds within each $L$, recomputing the per-$L$ aggregate ratio, and refitting the power law.

For the association analysis, the common $L$ grid between the initialization and training sweeps is used. We compute Pearson correlation between log initialization SNR ratios and log training ratios. Confidence intervals are bootstrap intervals over the common $L$ values; permutation $p$-values are two-sided.

\subsection{Supplementary association figures}
\label{app:supplementary-association}

Figure~\ref{fig:snr-training-correlation} reports the association between the initialization SNR ratio and the threshold-speedup ratio.
Figure~\ref{fig:snr-auc-correlation} reports the association between the initialization SNR ratio and the AUC ratio.

\begin{figure}[h]
\centering
\safeincludegraphics[width=\linewidth]{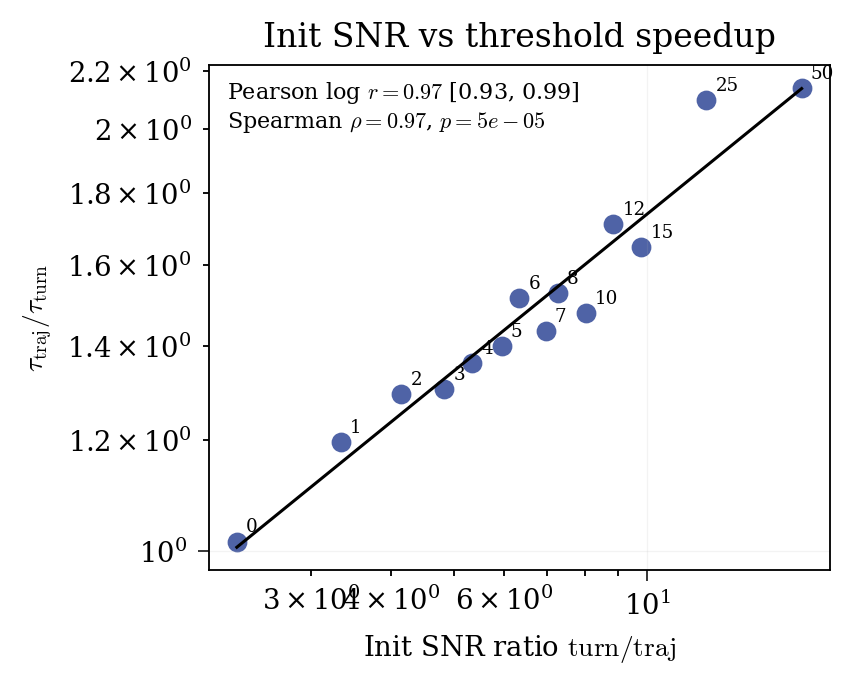}
\caption{Association between the initialization SNR ratio and threshold-speedup ratio on the shared $L$ values. Each point is one value of $L$.}
\label{fig:snr-training-correlation}
\end{figure}

\begin{figure}[h]
\centering
\safeincludegraphics[width=\linewidth]{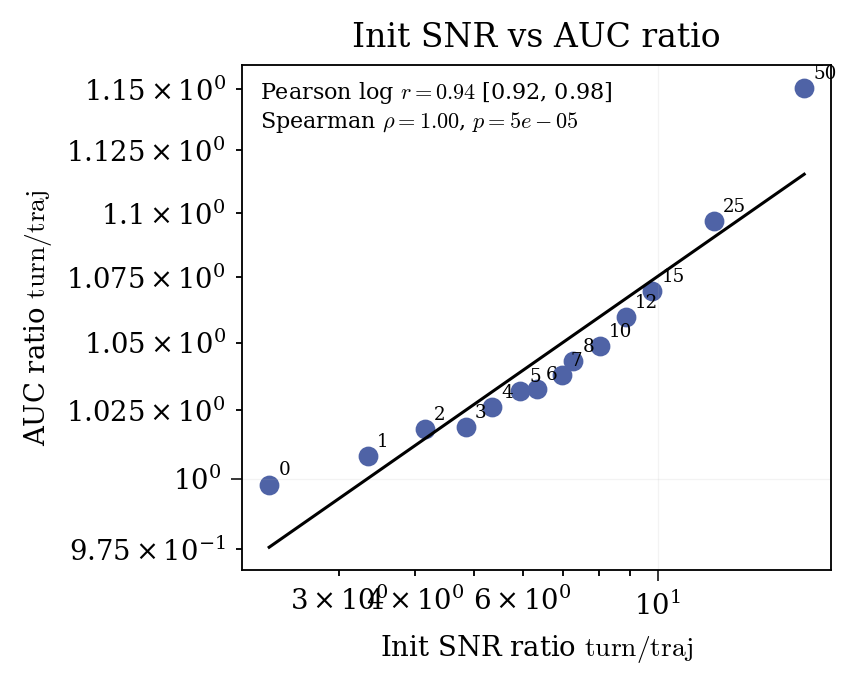}
\caption{Association between the initialization SNR ratio and AUC ratio on the shared $L$ values. Each point is one value of $L$.}
\label{fig:snr-auc-correlation}
\end{figure}

\subsection{Per-estimator iterations-to-threshold}
\label{app:iters-curve}

Figure~\ref{fig:iters-to-target} reports iterations-to-threshold for each estimator separately, complementing the speedup ratio in the main text. At $\rho=1$, both estimators reach the threshold in roughly $76$--$78$ iterations; at $\rho=1/51$, the trajectory-level estimator requires $225.25$ iterations ($[186.0, 254.0]$) while the turn-level estimator requires $102.0$ ($[94.75, 118.0]$).

\begin{figure}[h]
\centering
\safeincludegraphics[width=\linewidth]{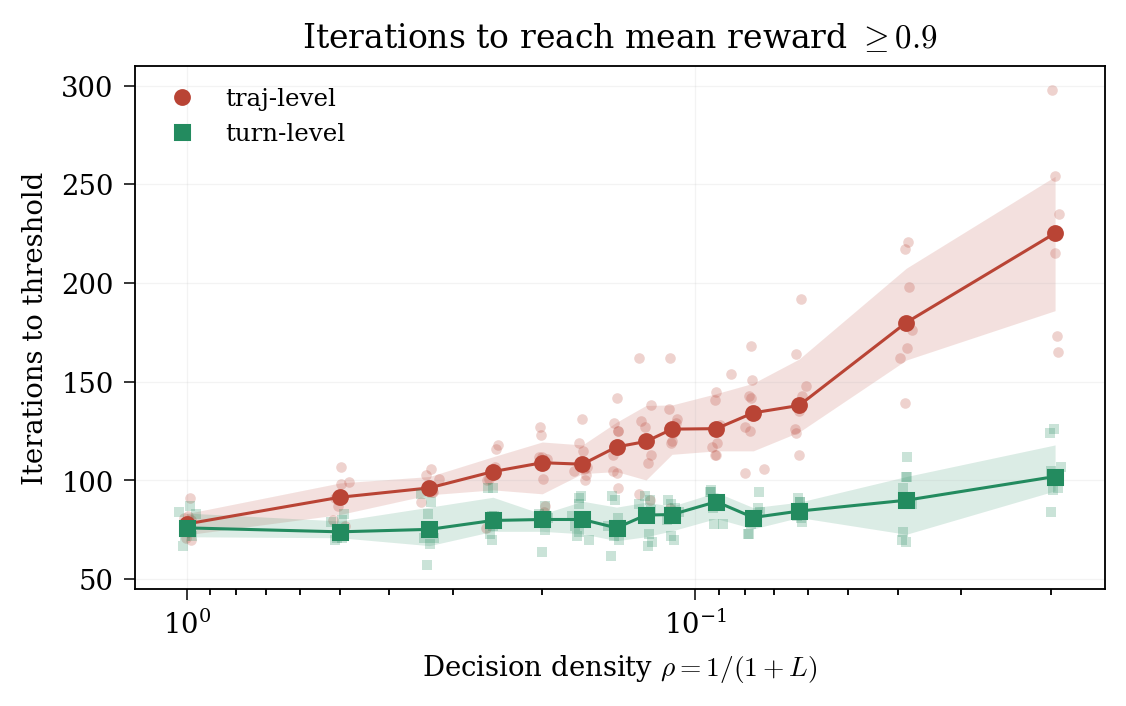}
\caption{Iterations to reach mean reward at least $0.9$ vs.\ decision density on a logarithmic $\rho$ axis.}
\label{fig:iters-to-target}
\end{figure}

\subsection{Critic-noise frontier}
\label{app:critic-frontier}

The initialization SNR ratio under varying levels of critic noise empirically traces the imperfect-critic frontier of Proposition~\ref{prop:imperfect-critic-scaling}. Figure~\ref{fig:critic-frontier-L} shows the SNR ratio as a function of $L$ for several critic-noise levels; Figure~\ref{fig:critic-frontier-noise} shows the same data as a function of critic noise for several values of $L$.

\begin{figure}[h]
\centering
\safeincludegraphics[width=\linewidth]{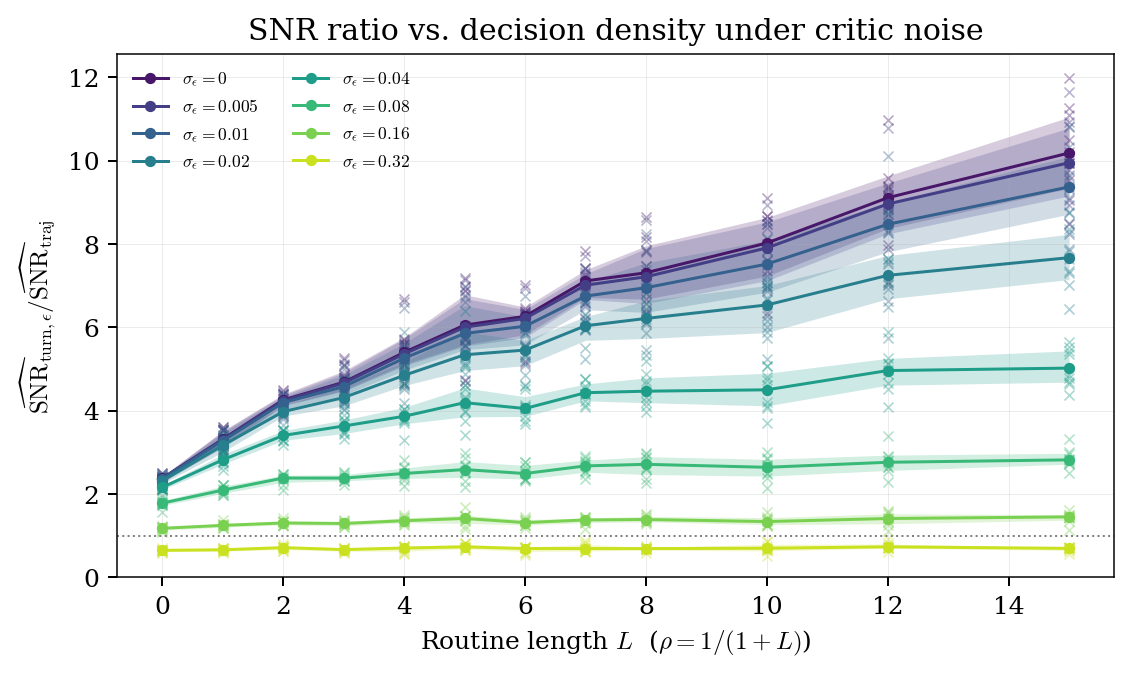}
\caption{Initialization SNR ratio $\mathrm{SNR}_{\mathrm{turn},\epsilon}/\mathrm{SNR}_{\mathrm{traj}}$ vs.\ $L$ at several critic-noise levels. Horizontal asymptotes form as we raise the critic error scale.}
\label{fig:critic-frontier-L}
\end{figure}

\begin{figure}[h]
\centering
\safeincludegraphics[width=\linewidth]{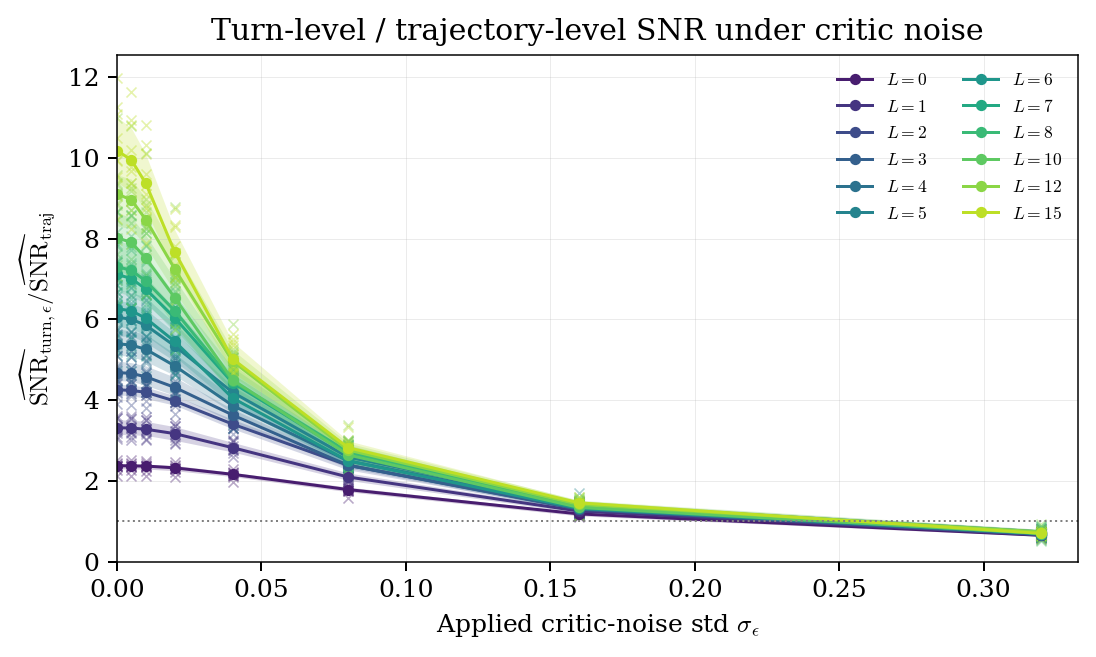}
\caption{Initialization SNR ratio $\mathrm{SNR}_{\mathrm{turn},\epsilon}/\mathrm{SNR}_{\mathrm{traj}}$ vs.\ critic noise at several values of $L$.}
\label{fig:critic-frontier-noise}
\end{figure}

\subsection{Supplementary AUC figures}
\label{app:supplementary-auc}

AUC is retained as a whole-curve training summary, but the main analysis uses iterations-to-threshold. Figure~\ref{fig:auc-vs-rho} reports AUC by decision density, Figure~\ref{fig:auc-ratio-power} reports the paired AUC ratio, and Figure~\ref{fig:snr-auc-correlation} reports its association with the initialization SNR ratio.

\paragraph{Bootstrap test for $\alpha=-1/2$ (Experiment 1).} A bootstrap test of exact equality to $\alpha=-1/2$ rejects at $p=0.0004$; the estimated deviation is small ($|\hat{\alpha}+1/2|=0.0084$), and the full confidence interval remains inside the practical equivalence band $[-0.52,-0.48]$.

\begin{figure}[h]
\centering
\safeincludegraphics[width=\linewidth]{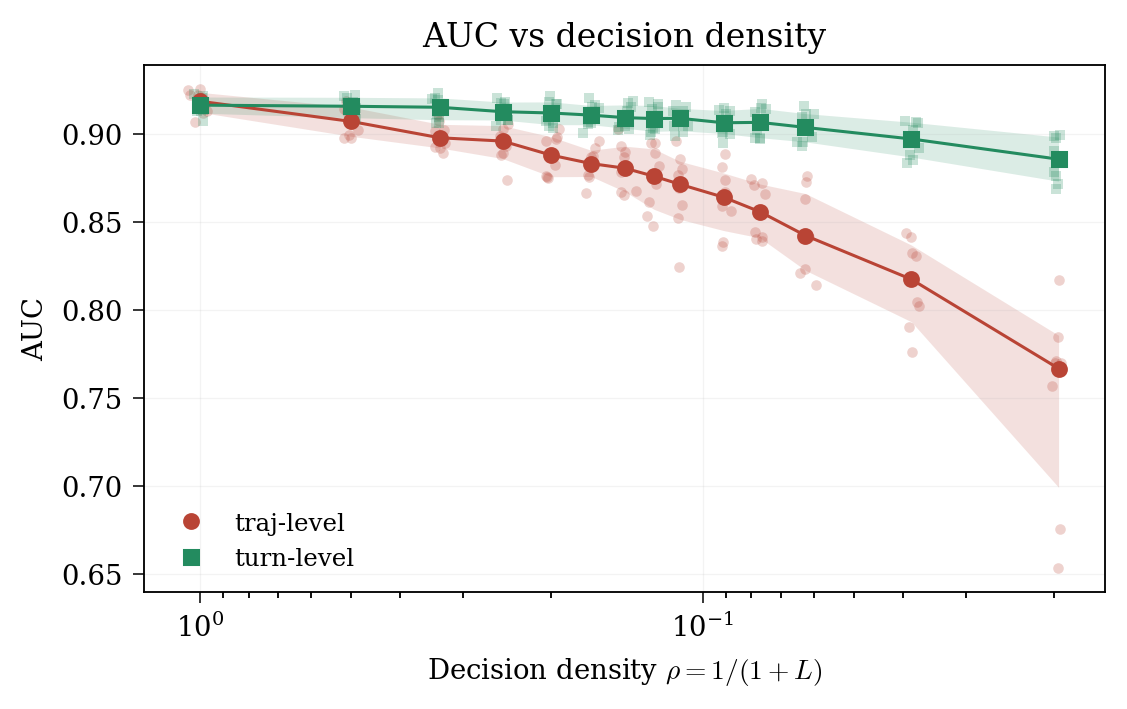}
\caption{AUC vs.\ decision density on a logarithmic $\rho$ axis.}
\label{fig:auc-vs-rho}
\end{figure}

\begin{figure}[h]
\centering
\safeincludegraphics[width=\linewidth]{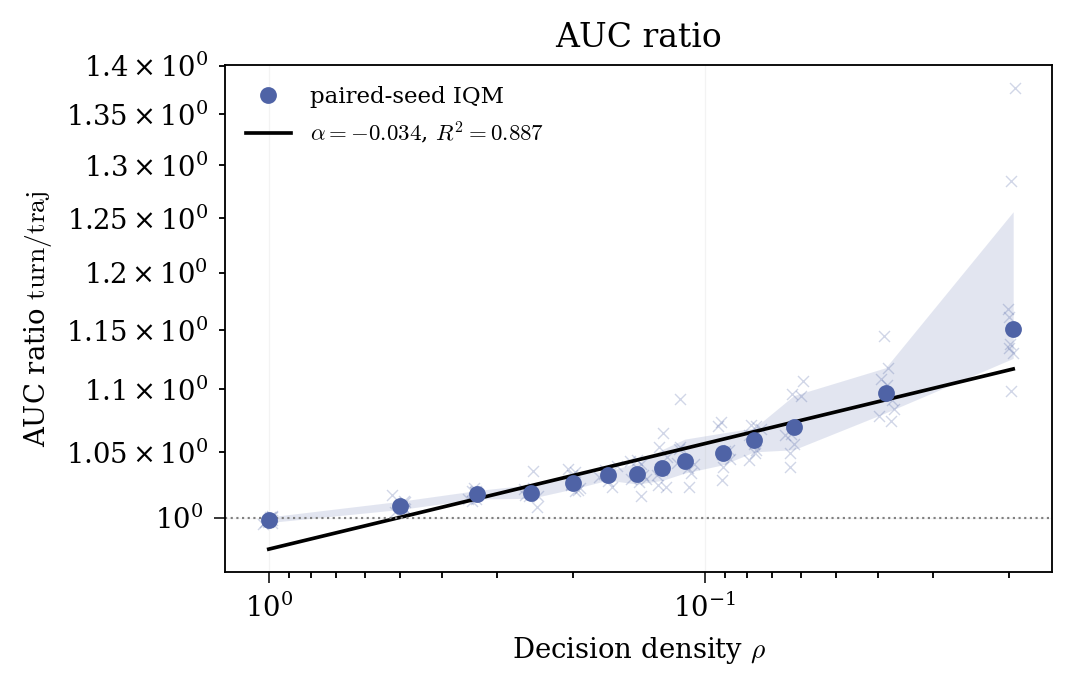}
\caption{AUC ratio $\mathrm{AUC}_{\mathrm{turn}}/\mathrm{AUC}_{\mathrm{traj}}$ vs.\ decision density.}
\label{fig:auc-ratio-power}
\end{figure}

\subsection{Sweep grids}
\label{app:grids}

All experiments share $C = K_C = K_R = 4$, so random-policy reward is $1/K_C = 0.25$ and the routine action space is non-trivial ($K_R > 1$); whether the policy actually places mass on more than one routine action, and hence whether routine score variance is nonzero, as required by Assumption~\ref{as:a2}(c), depends on $\pi_\theta$ and is not enforced by construction after initialization. Each experimental seed sets both the training run (weight initialization and on-policy rollout sampling) and the MDP instance (which of the $K_C$ doors is correct at each critical depth), so cross-seed confidence intervals reflect robustness to both simultaneously.

\begin{table}[h]
\centering
\small
\begin{tabular}{@{}l>{\raggedright\arraybackslash}p{0.62\columnwidth}@{}}
\toprule
\multicolumn{2}{@{}l}{\textbf{Training-performance sweep}}\\
\midrule
$L$ grid          & $\{0, 1, 2, 3, 4, 5, 6, 7, 8, 10, 12, 15, 25, 50\}$\\
Seeds / cell      & $8$\\
Hyperparameters   & Traj-level $G = 16$, Turn-level $k = 8$\\
\midrule
\multicolumn{2}{@{}l}{\textbf{SNR, init-probe}}\\
\midrule
$L$ grid          & $\{0, 1, \ldots, 50\}$\\
Seeds / cell      & $32$\\
Hyperparameters   & $G = 16$, $N = 1024$, analytical $V$\\
\bottomrule
\end{tabular}
\caption{Sweep grids. Decision density is $\rho = 1/(1+L)$, so the $L \leq 50$ range covers $\rho \in [\tfrac{1}{51}, 1]$.}
\label{tab:grids}
\end{table}

\end{document}